\documentclass{article}

\usepackage[preprint]{neurips_2025}

\usepackage[utf8]{inputenc} % allow utf-8 input
\usepackage[T1]{fontenc}    % use 8-bit T1 fonts
\usepackage{hyperref}       % hyperlinks
\usepackage{url}            % simple URL typesetting
\usepackage{booktabs}       % professional-quality tables
\usepackage{booktabs}
\usepackage{graphicx}       % for \resizebox
\usepackage{amsfonts}       % blackboard math symbols
\usepackage{nicefrac}       % compact symbols for 1/2, etc.
\usepackage{microtype}      % microtypography
\usepackage{xcolor}         % colors

\usepackage{amsmath,amssymb,amsthm}

\newtheorem{theorem}{Theorem}

\newtheorem{proposition}[theorem]{Proposition}

\usepackage{algorithm}
\usepackage{algpseudocode}
\usepackage{graphicx}
\usepackage{subcaption}

\defcitealias{starvla2025}{StarVLA, 2026}

\title{VLA-GSE: Boosting Parameter-Efficient Fine-Tuning in VLA with Generalized and Specialized Experts}

\author{
  Yuhua Jiang$^{1,2}$\thanks{Work conducted at Microsoft Research Asia.}, Junjie Lu$^{1}$, Xinyao Qin$^{1,2}$, Xiaoyu Chen$^{1,2}$, Kaixin Wang$^{1}$, \\
  \textbf{Feifei Gao$^{2}$, Li Zhao$^{1}$\thanks{Corresponding author.}} \\
  $^{1}$Microsoft Research Asia \\
  $^{2}$Tsinghua University
}

\begin{document}

\maketitle

\begin{abstract}
Vision-language-action (VLA) models inherit rich visual-semantic priors from pre-trained vision-language backbones, but adapting them to robotic control remains challenging.
Full fine-tuning (FFT) is prone to overfitting on downstream robotic data and catastrophic forgetting of pretrained vision-language capabilities. Parameter-efficient fine-tuning (PEFT) better preserves pre-trained knowledge,
yet existing PEFT methods still struggle to adapt effectively 
to robot control tasks.
To address this gap, 
we propose \textbf{VLA-GSE}, a parameter-efficient VLA fine-tuning framework that improves control adaptation while retaining PEFT's knowledge preservation advantage.
Specifically, VLA-GSE (Generalized and Specialized Experts) is initialized by spectrally decomposing the frozen backbone, assigning leading singular components to generalized experts (shared experts) and disjoint residual components to specialized experts (routed experts). This decomposition improves adaptation capacity under a fixed trainable-parameter budget.
Across comparable-budget comparison, VLA-GSE updates only 2.51\% of full model parameters and consistently outperforms strong FFT and PEFT baselines. It achieves 81.2\% average zero-shot success on LIBERO-Plus, preserves pre-trained VLM capability comparably to LoRA on multimodal understanding benchmarks, and improves real-world manipulation success under multiple distribution shifts.
Code is available at \url{https://github.com/YuhuaJiang2002/VLA-GSE}. 
\end{abstract}

\section{Introduction}
%\li{I rewrite the introduction part. }

Vision-language-action (VLA) models have emerged as a compelling paradigm for general-purpose robotic control, where pre-trained vision-language models (VLMs) are adapted into embodied policies~\citep{firoozi2023foundation, ma2024survey, zitkovich2023rt2, kim2025openvla, zhang2026vlm4vla, wu2026vlanext, chen2025villa}. By inheriting strong multimodal priors from foundation VLMs \citep{dai2023instructblip}, these models map visual observations and language instructions to low-level actions, enabling substantial progress in long-horizon manipulation and boarder generalization \citep{gao2025vlaos, chen2024spatialvlm, li2024vision, mees2022calvin, ahn2023saycan, huang2025thinkact, shridhar2023peract, black2024pi0, asyncVLA, chen2024igor}.

Despite this progress, robust VLM-to-VLA adaptation remains fundamentally challenging. Full fine-tuning (FFT) offers high expressivity but often overfits limited robotic data and degrades pre-trained vision-language competence through catastrophic forgetting \citep{yang2026abotm0, li2024towards, wen2025diffusionvla, hancock2025actions, dey2024revla, yang2025instructvla, yu2026twinbrainvla}. Parameter-efficient fine-tuning (PEFT), such as low-rank adaptation (LoRA), is more stable in preserving pre-trained capability \citep{hancock2025actions}, yet existing PEFT approaches frequently under-adapt when precise control is required. This raises a central question: \textit{can PEFT be made sufficiently expressive for precise robot-control adaptation while retaining its knowledge-preservation advantage?}

To address this question, we propose \textbf{VLA-GSE} (\textbf{G}eneralized and \textbf{S}pecialized \textbf{E}xperts), a parameter-efficient VLA fine-tuning framework designed to strengthen adaptation without sacrificing the knowledge retention behavior of PEFT. VLA-GSE is built from a spectral decomposition of the frozen backbone: leading singular components initialize generalized experts (shared experts), while disjoint residual components initialize specialized experts (routed experts). This decomposition concentrates shared capacity in always-available experts, reduces redundancy in routed adaptation, and enables more flexible expert composition for targeted control behaviors.

This design introduces two optimization issues. First, spectral initialization induces large singular-value imbalance across specialized experts, which leads to uneven update magnitudes and weakly trained experts. We address this with \emph{expert-wise gradient scale balancing}, which compensates for spectral imbalance and harmonizes optimization scales. Second, because routing is input-dependent, the equivalent weight becomes sample-dependent and can drift from the frozen backbone at initialization. We address this with a \emph{backbone weight adjustment} mechanism that preserves the backbone-equivalent weight in expectation.

Empirically, VLA-GSE updates only 2.51\% of full model parameters yet consistently outperforms strong baselines under a comparable trainable-parameter budget. On LIBERO-Plus, it achieves 81.2\% average zero-shot success, exceeding FFT by 6.3\% and the strongest PEFT baseline by 4.4\%, with strong robustness under diverse perturbations.
On multimodal understanding benchmarks, VLA-GSE preserves pre-trained VLM capability comparably to LoRA and outperforms both FFT and co-trained VLA baselines.
In real-world evaluation across four manipulation tasks under four distribution shifts, VLA-GSE reaches 82.5\% success, outperforming FFT by 16.7\%.

\section{Related Work and Background}
\label{sec:related_work}

\paragraph{Finetuning VLMs into VLAs.} 
Recent VLA models adapt pre-trained VLMs into robot policies by retaining the multimodal backbone and training additional action modules on robot trajectories \citep{zhang2026vlm4vla,wu2026vlanext,gao2025vlaos}. Some recent architectures, such as OpenVLA-OFT-style designs, further improve inference efficiency via parallel decoding.
Since FFT methods are memory consuming, 
LoRA is explored in VLA fine-tuning works
\citep{kim2025openvla,hancock2025actions,omaisan2025towardsaccessiblephysicalai}. 
% This has motivated parameter-efficient adaptation strategies that preserve backbone knowledge while updating only a small subset of parameters.
In particular, {OpenVLA}~\citep{kim2025openvla} is among the earliest VLA works to adopt LoRA in VLA fine-tuning for lower training memory consumption.
VLM2VLA \citep{hancock2025actions} provides evidence that LoRA better preserves pre-trained VLM knowledge during VLA adaptation. 
% Existing PEFT-based VLA finetuning methods are still largely limited to LoRA-style adaptation \citep{hancock2025actions,omaisan2025towardsaccessiblephysicalai}.
Although LoRA-based methods better preserve pre-trained vision-language knowledge than FFT, they often lack sufficient adaptation capacity for precise embodied control. Our method explicitly targets both goals: preserving pretrained multimodal knowledge while enabling stronger downstream control adaptation.
% finetuning vlm into vla （OpenVLA-OFT 类似的架构）提一下 parallel decoding
% only peft in VLA finetuning is LoRA in \citep{hancock2025actions，omaisan2025towardsaccessiblephysicalai} 重点讲讲和这篇区别, 他们可以preserve knowledge 但是control 的性能不太行

\paragraph{Parameter-Efficient Fine-Tuning.}
PEFT has become a practical alternative to FFT mainly in LLM finetuning, with LoRA as the canonical example \citep{hu2022lora}. Recent work shows that adapter structure and initialization critically affect adaptation quality: SVD-based methods such as PiSSA, MiLoRA, KaSA, and GOAT exploit different spectral components to better preserve pretrained knowledge or expose task-relevant subspaces \citep{meng2024pissa,wang2024milora,wang2024kasa,fan2025goat}, while MoE-style methods such as MoLoRA, AdaMoLE, and HydraLoRA improve capacity through routed low-rank experts \citep{zadouri2024pushing,liu2024adamole,tian2024hydralora}.
However, existing PEFT-based VLA finetuning methods are still largely limited to LoRA-style adaptation \citep{hancock2025actions,omaisan2025towardsaccessiblephysicalai}, 
and 
PEFT remains under-explored in VLA finetuning, where VLA models require stronger adaptation for precise embodied control. 
% \li{remove knowledge preservation, what LLM needs most is compute efficiency. Just address VLA needs stronger adaptation for precise embodied control.} 
Motivated by this difference, VLA-GSE is tailored to the control setting, combining a shared generalized expert with routed specialized experts to support both transferable adaptation and input-specific control refinement. We further introduce gradient scale balancing to stabilize optimization across experts initialized from different spectral components.
% peft 主要是在LLM上面做的，PEFT 在VLA领域 under explored，VLA-GSE不一样也是因为task不一样，对control adaptation的要求更强，改进是为了VLA 精细控制设计的

\begin{figure*}[t]
\centering
\includegraphics[width=0.8\textwidth]{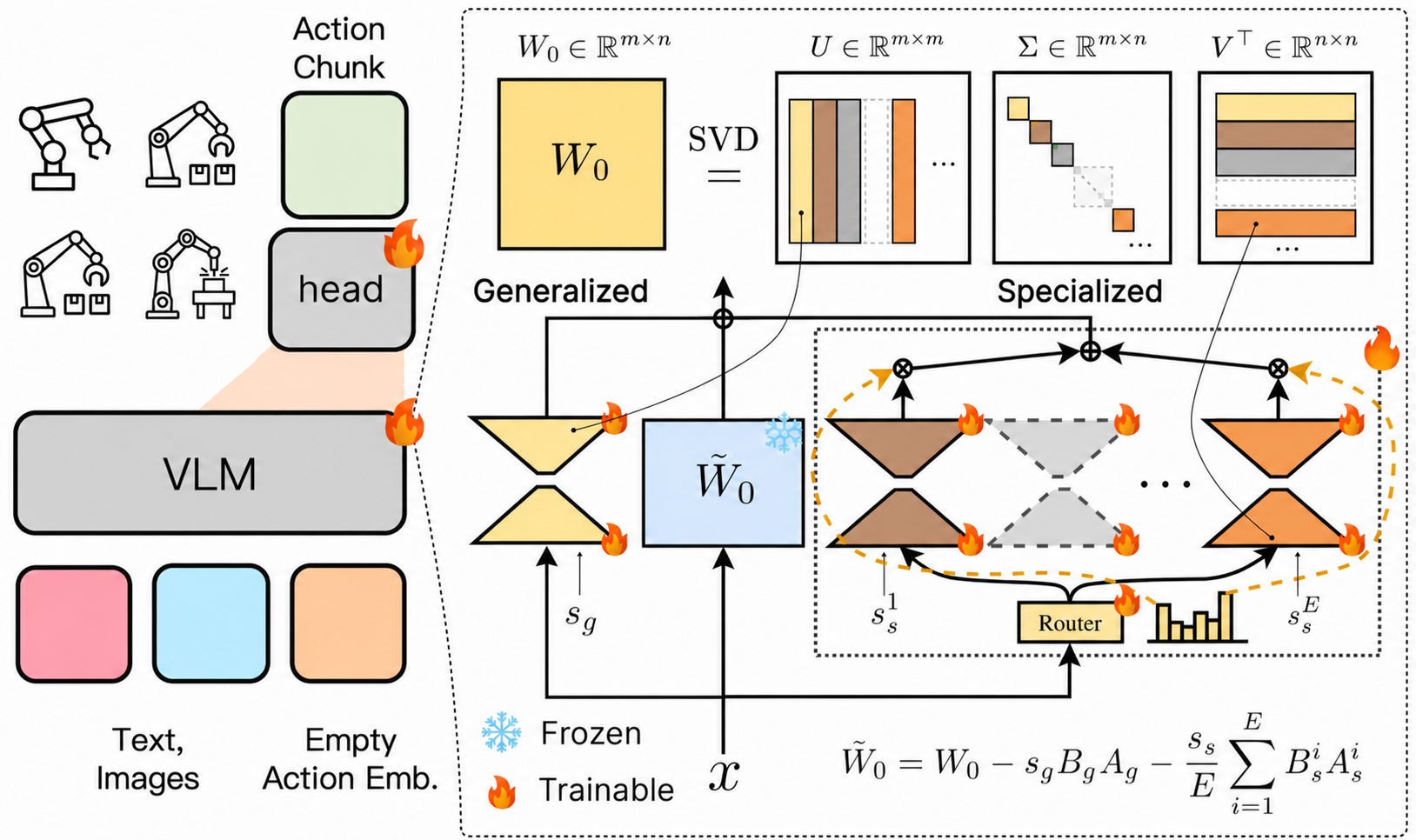}
\caption{\textbf{Overview of the VLA-GSE framework.} (Top Right) VLA-GSE uses an SVD-based adaptive priors scheme to initialize generalized experts and multiple specialized experts based on sorted singular value segments. (Bottom Right) During the forward pass, the output is formed by summing the input transformations induced by the adjusted frozen pre-trained weight $\tilde{W}_0$, the generalized expert, and the dynamically selected top-$k$ specialized experts. Unlike prior MoE LoRA, the generalized expert is always active and initialized from the SVD-based priors. GSE is only applied for the pretrained weight $W_0$ in the VLM. 
}
\label{gse}
\end{figure*}

\section{Method}
\label{sec:method}

% \li{consider write something here,
% to illustrate your method, highlight the key design against other PEFT method, and highlight the technique challenges...}

\paragraph{Overview.}
% \jiang{
Our model follows a standard VLM-to-VLA pipeline, where a pre-trained VLM encodes image observations and language instructions, and an OpenVLA-OFT-style action head is used to generate robot actions \citep{kim2025fine}.
The action head is fully fine-tuned for control and we apply PEFT only to the VLM backbone: for a pre-trained weight $W_0 \in \mathbb{R}^{m \times n}$, we learn a structured low-rank update while keeping most backbone parameters frozen. On top of this foundation, we propose VLA-GSE, which decomposes backbone adaptation into an always-on generalized expert and multiple routed specialized experts. The design is tailored to VLA finetuning, where the model must both preserve pretrained vision-language knowledge and support precise robot control adaptation. VLA-GSE addresses this through generalized-specialized decomposition, expert-wise gradient scale balancing, and backbone weight adjustment.
% }
% Ours 基于StarVLA 的架构，选择了Openvla-OFT 的action head，PEFT 的VLM + FFT 的 action head MLP生成action， 
% 和 peft 的 background （只涉及 PEFT for weight in VLM，FFT for action head， LoRA 以及 variants）
% peft 针对的是 $W_0 \in \mathbb{R}^{m \times n}$ denote a weight in VLM 
% 以及GSE导致的chanllenge和解决方法

% \label{subsec:motivation}

% Downstream adaptation of VLA models requires balancing two competing objectives: preserving broad, domain-general knowledge from pre-training and acquiring fine-grained manipulation skills for specific robotic tasks. Existing SVD-based MoE methods \citep{fan2025goat} typically partition singular components uniformly across experts, but such a homogeneous design provides no explicit mechanism to ensure consistent use of foundational knowledge across tokens and modalities.

% Motivated by a spectral view of pre-trained weights, we propose VLA-GSE, which decouples adaptation into an always-active \textbf{Generalized Expert}, initialized from the leading singular components to preserve domain-general backbone knowledge, and a set of sparsely routed \textbf{Specialized Experts}, initialized from disjoint residual components to capture context-dependent adaptation. This design improves parameter-efficient transfer while mitigating the loss of general knowledge, as illustrated in Figure~\ref{gse}.

\subsection{Generalized and Specialized Expert Initialization via SVD}
We initialize VLA-GSE from the SVD of the frozen pre-trained weight matrix $W_0 \in \mathbb{R}^{m \times n}$: $W_0 = U \Sigma V^\top$.
Sorting singular values in descending order induces a spectral decomposition of $W_0$, from which we allocate the leading components to the \emph{generalized expert} and the \emph{specialized experts}, as illustrated in Figure~\ref{gse}.

\paragraph{Generalized Expert.} 
We allocate the largest $r_g$ singular values to the generalized expert. This expert is tasked with capturing common knowledge and mitigating redundancy in routed experts.
% preserving and updating the most critical and generalized pre-trained knowledge in VLMs.\li{revise this sentence. copy motivation for alway-on expert from deepseekMoE.} 
Its initialization is formulated as follows:
\begin{equation}
    B_g = \sqrt{\frac{1}{s_g}} U_{1:r_g} \Sigma_{1:r_g}^{1/2} \in \mathbb{R}^{m \times r_g}, \quad A_g = \sqrt{\frac{1}{s_g}} \Sigma_{1:r_g}^{1/2} (V_{1:r_g})^\top \in \mathbb{R}^{r_g \times n}, \quad W_g = s_g B_g A_g ,
\end{equation}
where $U_{1:r_g}$, $\Sigma_{1:r_g}$, and $V_{1:r_g}$ correspond to the top segment of the SVD, $s_g$ is the scaling factor for the generalized expert, and $W_g$ is the generalized expert's effective weight. 
This design is motivated by the observation that leading singular components capture the dominant energy of the pre-trained weight matrix,
which makes the always-on adaptation more effective \citep{meng2024pissa}. 
% which empirically corresponds to domain-general features in pretrained VLMs.
% \li{revise this sentence accordingly}

\paragraph{Specialized Experts.} 
The remaining dominant singular components are evenly partitioned to initialize $E$ specialized experts. Under the adaptive prior strategy, we consider the SVD components indexed by $[r_g+1,\, r_g+dE]$ and assign each expert a contiguous block of rank $d$. Formally, let
$\mathcal{I}_i \triangleq \{\, r_g + (i-1)d + 1,\; \ldots,\; r_g + id \,\}, i \in \{1,2,\dots,E\}$.
The $i$-th specialized expert is then initialized by
$U_i = U_{:,\,\mathcal{I}_i}, 
% \qquad
\Sigma_i = \Sigma_{\mathcal{I}_i,\,\mathcal{I}_i},
% \qquad
V_i^\top = V^\top_{\mathcal{I}_i,\,:}$.
Using these designated segments, the initialization is formulated as:
\begin{equation}
    B_s^i = \sqrt{\frac{1}{s_s^i}} U_i \Sigma_i^{1/2} \in \mathbb{R}^{m \times d}, \quad A_s^i = \sqrt{\frac{1}{s_s^i}} \Sigma_i^{1/2} V_i^\top \in \mathbb{R}^{d \times n}, \quad W_s^i = s_s^i B_s^i A_s^i,
\end{equation}
where $s_s^i$ is the $i$-th expert's scaling factor that normalizes each expert’s scale based on the spectral magnitude of its assigned segment, and $W_s^i$ is the effective weight of the $i$-th specialized expert.
% \citep{fan2025goat}

\paragraph{Specialized Expert Selection Method}
During the forward pass, a router $W_z \in \mathbb{R}^{E \times n}$ maps the input $x$ to routing logits $z(x)$, which determine a sparse subset of specialized experts for $x$: 
    $p^i(x) = \frac{\exp(z^i(x))}{\sum_{j=1}^{E} \exp(z^j(x))}, z(x) = W_z x $.
Let $\Omega_k(x)$ denote the indices of the top-$k$ $p^i(x)$. The routing weights $w^i(x)$ are explicitly defined by applying a softmax function exclusively over the selected top-$k$ logits:
\begin{equation}
    w^i(x) = \begin{cases} 
    \frac{\exp(z^i(x))}{\sum_{j \in \Omega_k(x)} \exp(z^j(x))}, & \text{if } i \in \Omega_k(x), \\ 
    0, & \text{otherwise.} 
    \end{cases}
\end{equation}

\paragraph{Load Balancing Auxiliary Loss}
\label{sec:load_balancing}

To prevent the sparse router over-selecting a small subset of specialized experts, we introduce an auxiliary load-balancing loss to encourage balanced expert utilization across GSE blocks. % \citep{dai2024deepseekmoe, fedus2022switch}
For the $l$-th GSE block, let $f_i^{(l)}$ denote the fraction of tokens assigned to expert $i$, and let $P_i^{(l)}$ denote its average routing probability over the batch. We define
\begin{equation}
\begin{aligned}
\mathcal{L}_{\text{bal}}^{(l)} &= E \sum_{i=1}^{E} f_i^{(l)} P_i^{(l)}, \qquad
\mathcal{L}_{\text{final}} = \| \hat{{a}} - {a} \|_1 + \alpha \sum_{l=1}^{L} \mathcal{L}_{\text{bal}}^{(l)}.
\end{aligned}
\end{equation}
The first term in $\mathcal{L}_{\text{final}}$
is the action regression loss and the second term regularizes expert usage. 
Detailed explanation of load balancing auxiliary loss
is shown in Appendix~\ref{app:load_balancing}

% \subsection{Optimization Balancing}
\subsection{Gradient Scale Balancing.}
To avoid optimization imbalance across specialized experts, we analyze their gradient scales at initialization.
% Let $g \in \mathbb{R}^{m \times n}$ denote the gradient of the loss with respect to the overall aggregated weight, and 
Let $g_A^i$ and $g_B^i$ denote the localized gradients of the $i$-th specialized expert with respect to $A_s^i$ and $B_s^i$, respectively.
To characterize the expected gradient scale, we introduce the left and right second-moment matrices of the gradient:
$G_L \triangleq \mathbb{E}[g g^\top],
G_R \triangleq \mathbb{E}[g^\top g]$.
Here, $G_L$ and $G_R$ describe the second-moment structure of $g$ in the left and right subspaces, respectively. 
The following theorem gives the exact initialization-time gradient scale for each expert.

\begin{theorem}[Expert-wise gradient scale balancing]
\label{thm:scaling_factor}
Let $w^i(x)$ denote the input-dependent routing coefficient multiplying the
contribution of the $i$-th specialized expert in the aggregated equivalent
weight for input $x$.
Under the idealized balanced-routing condition at initialization, define
$\rho_w := \mathbb{E}_x\!\left[(w^i(x))^2\right],$
and assume that $\rho_w$ is identical across all specialized experts
$i \in \{1,\dots,E\}$.
Then, for the $i$-th specialized expert, the expected squared Frobenius norms
of its localized gradients satisfy
\begin{equation}
\mathbb{E}\!\left[\|g_A^i\|_F^2\right]
=
\rho_w \, s_s^i \, \alpha_i^L,
\qquad
\mathbb{E}\!\left[\|g_B^i\|_F^2\right]
=
\rho_w \, s_s^i \, \alpha_i^R ,
\end{equation}
where
\begin{equation}
\alpha_i^L = \mathrm{Tr}\!\left(\Sigma_i U_i^\top G_L U_i\right),
\qquad
\alpha_i^R = \mathrm{Tr}\!\left(\Sigma_i V_i^\top G_R V_i\right).
\end{equation}
Therefore, under balanced routing, the optimization scale of expert $i$ is
determined jointly by its spectral magnitude $\Sigma_i$, the projected
second-order gradient energy on its corresponding singular subspaces, and a
shared routing second-moment factor $\rho_w$.
Moreover, if there exist constants $\kappa_L,\kappa_R > 0$
such that\footnote{This assumption is automatically satisfied under
$G_L =\mathbb{E}[g g^\top] \propto I$ and $G_R =\mathbb{E}[g^\top g] \propto I$, which is
the commonly used isotropic gradient variance assumption
\citep[used in Theorem 1 of][]{jastrzebski2018three}.}
\begin{equation}
\alpha_i^L = \kappa_L \, \mathrm{Tr}(\Sigma_i),
\qquad
\alpha_i^R = \kappa_R \, \mathrm{Tr}(\Sigma_i),
\qquad
\forall i \in \{1,\dots,E\},
\label{eq:projected_uniformity_main}
\end{equation}
then choosing
\begin{equation}
s_s^i
=
s_{\mathrm{base}} \cdot \frac{C}{\mathrm{Tr}(\Sigma_i)}
\label{eq:trace_inverse_scaling_main}
\end{equation}
equalizes both $\mathbb{E}[\|g_A^i\|_F^2]$ and
$\mathbb{E}[\|g_B^i\|_F^2]$ across all specialized experts, since the
additional factor $\rho_w$ is shared by all experts under the balanced-routing
assumption. Here $s_{\mathrm{base}}$ is a global base scaling hyperparameter
and $C$ is a normalization constant, e.g.,
$C = \frac{1}{E}\sum_{j=1}^E \mathrm{Tr}(\Sigma_j)$.
\end{theorem}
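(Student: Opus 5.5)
The plan is a direct chain-rule computation at initialization, followed by substitution of the scaling rule. First I would make the per-input equivalent weight explicit:
\[
W(x) = \tilde W_0 + s_g B_g A_g + \sum_{i=1}^E w^i(x)\, s_s^i B_s^i A_s^i ,
\]
and write $g = \partial \mathcal{L}/\partial W$ for the gradient of the loss with respect to this aggregated weight. Since $W(x)$ depends on $A_s^i$ and $B_s^i$ only through the bilinear term $w^i(x) s_s^i B_s^i A_s^i$, the chain rule gives the localized gradients
\[
g_A^i = w^i(x)\, s_s^i\, (B_s^i)^\top g, \qquad g_B^i = w^i(x)\, s_s^i\, g\, (A_s^i)^\top .
\]

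Next I would substitute the SVD initialization $B_s^i = (s_s^i)^{-1/2} U_i \Sigma_i^{1/2}$ and $A_s^i = (s_s^i)^{-1/2}\Sigma_i^{1/2} V_i^\top$. Then
\[
\|g_A^i\|_F^2 = (w^i)^2 (s_s^i)^2\, \mathrm{Tr}\!\left(g^\top B_s^i (B_s^i)^\top g\right).
\]
Using $B_s^i (B_s^i)^\top = (s_s^i)^{-1} U_i \Sigma_i U_i^\top$ and cyclicity of the trace, this equals
\[
(w^i)^2 s_s^i\, \mathrm{Tr}\!\left(\Sigma_i U_i^\top g g^\top U_i\right).
\]
Symmetrically, $A_s^i (A_s^i)^\top$ leads to $V_i\Sigma_i V_i^\top$ (as $(A_s^i)^\top A_s^i = (s_s^i)^{-1} V_i \Sigma_i V_i^\top$), so
\[
\|g_B^i\|_F^2 = (w^i)^2 s_s^i\, \mathrm{Tr}\!\left(\Sigma_i V_i^\top g^\top g V_i\right).
\]
So the net power of $s_s^i$ is one: the $(s_s^i)^2$ from the forward scaling and the $(s_s^i)^{-1}$ from the initialization partially cancel.

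Then I take expectations. The main obstacle is that $(w^i(x))^2$ and $g g^\top$ both depend on $x$, so the expectation does not factor in general. I would make explicit that the idealized balanced-routing condition at initialization includes treating the routing coefficient as uncorrelated with the gradient second moments, so that
\[
\mathbb{E}\!\left[(w^i)^2 g g^\top\right] = \rho_w G_L, \qquad \mathbb{E}\!\left[(w^i)^2 g^\top g\right] = \rho_w G_R .
\]
This is plausible at initialization because the router is freshly initialized and, under balanced routing, nearly uninformative about the input. Given this, linearity of trace and expectation yields $\mathbb{E}\|g_A^i\|_F^2 = \rho_w s_s^i \alpha_i^L$ and $\mathbb{E}\|g_B^i\|_F^2 = \rho_w s_s^i \alpha_i^R$.

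Finally, under the projected-uniformity assumption \eqref{eq:projected_uniformity_main}, the two expectations become $\rho_w \kappa_L s_s^i \mathrm{Tr}(\Sigma_i)$ and $\rho_w \kappa_R s_s^i \mathrm{Tr}(\Sigma_i)$. Plugging in \eqref{eq:trace_inverse_scaling_main} gives $\rho_w \kappa_L s_{\mathrm{base}} C$ and $\rho_w \kappa_R s_{\mathrm{base}} C$, which are independent of $i$ because $\rho_w$ is shared across experts. For the footnote, I would check that isotropy $G_L = c I$ gives $\alpha_i^L = c\,\mathrm{Tr}(\Sigma_i U_i^\top U_i) = c\,\mathrm{Tr}(\Sigma_i)$ by orthonormality of the columns of $U_i$, and likewise for $G_R$ with $V_i$.
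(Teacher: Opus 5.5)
Your proposal is correct and follows the paper's proof essentially step for step. You use the chain rule for $g_A^i$ and $g_B^i$, substitute the SVD initialization so that one power of $s_s^i$ cancels, apply trace cyclicity, and then plug in the trace-inverse rule. Your explicit decorrelation assumption $\mathbb{E}[(w^i(x))^2 g g^\top] = \rho_w G_L$ (and its right-sided analogue with $G_R$) is exactly what the paper uses implicitly when it ``absorbs'' the shared routing factor $\rho_w$, so stating it openly adds rigor rather than departing from the paper's argument.
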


Theorem~\ref{thm:scaling_factor} shows that the trace-inverse scaling rule in Eq.~\eqref{eq:trace_inverse_scaling_main} is justified whenever the projected second-order gradient statistics are approximately uniform across experts, as specified in Eq.~\eqref{eq:projected_uniformity_main}. 
% Load balancing 在文
The detailed proof is provided in Appendix~\ref{sec:proof_scaling}.
% Notably, Eq.~\eqref{eq:projected_uniformity_main} is automatically satisfied when $\mathbb{E}[g g^\top] \propto I$ and $\mathbb{E}[g^\top g] \propto I$, which is the commonly used isotropic gradient variance assumption in gradient descent analysis \citep[see Theorem 1 of][]{jastrzebski2018three}.
% In practice, Eq.~\eqref{eq:trace_inverse_scaling_main} prevents experts associated with smaller singular-value blocks from receiving disproportionately weak updates at initialization, and thus stabilizes training.

\subsection{Backbone Weight Adjustment}

Unlike LoRA-style zero perturbation at initialization, VLA-GSE induces a non-zero low-rank perturbation at initialization, which would otherwise shift the weight away from $W_0$ before fine-tuning. 
To preserve the pre-trained backbone as much as possible at initialization, we compensate for the non-zero perturbation introduced by the generalized and specialized experts by adjusting the frozen backbone weight from $W_0$ to $\tilde{W}_0$.
Let $W_{eq}(x)$ denote the input-dependent equivalent weight of the GSE block. Then, it follows that
\begin{equation}
y = W_{eq}(x)x,\qquad
W_{eq}(x)=\tilde{W}_0+s_g B_gA_g+\sum_{i=1}^{E} w^i(x)\, s_s^i B_s^i A_s^i .
\end{equation}

% We therefore choose $\tilde{W}_0$ such that the initialized effective weight remains aligned with the pre-trained backbone.

In principle, one may desire an exact sample-wise equality:
    $W_{eq}(x)=W_0, \forall x$.
However, this is generally impossible because the selection of specialized experts varies with $x$ through $w^i(x)$. 
Accordingly, we aim to maintain consistency with the pretrained weight \emph{in expectation} under the routing distribution at initialization.
At random initialization, the router is symmetric across specialized experts in expectation.
Since they also satisfy $\sum_{i=1}^{E} w^i(x)=1$, we have
    $\mathbb{E}_x[w^i(x)] = \frac{1}{E},  \forall i \in \{1,\dots,E\}$.
It follows that the expected weight of the specialized experts is
\begin{equation}
    \mathbb{E}_x \left[ \sum_{i=1}^{E} w^i(x)\, s_s^i B_s^i A_s^i \right]
    =
    \sum_{i=1}^{E} \mathbb{E}_x[w^i(x)]\, s_s^i B_s^i A_s^i
    =
    \frac{1}{E} \sum_{i=1}^{E} s_s^i B_s^i A_s^i.
\end{equation}

This motivates defining the expected residual shift introduced at initialization:
\begin{equation}
    W_{res}
    =
    s_g B_g A_g
    +
    \frac{1}{E} \sum_{i=1}^{E} s_s^i B_s^i A_s^i,
\end{equation}
where $W_{res}$ denotes the total expected offset of the initialized GSE weight relative to the original backbone. We then compensate for this offset by redefining the adjusted frozen weight as
\begin{equation}
    \tilde{W}_0 = W_0 - W_{res} = W_0 - s_g B_g A_g
    -
    \frac{1}{E} \sum_{i=1}^{E} s_s^i B_s^i A_s^i.
    \label{adjustment}
\end{equation}
Under the backbone weight adjustment in Eq.~\ref{adjustment}, 
$W_{eq}(x)$ is aligned with $W_0$ in expectation at initialization, where the detailed proof is shown in Appendix~\ref{proof2}. 
The full initialization and forward-pass procedure is provided in Appendix~\ref{sec:appendix_algorithm}. 

\section{Simulation Experiments}

\subsection{Settings}

\textbf{Model Architecture and Initialization.} 
In our experiments, we adopt Qwen3-VL-4B-Instruct as the foundation VLM backbone. To efficiently adapt the VLM into a VLA model, we apply our proposed VLA-GSE framework. Specifically, the low-rank dimension is set to $r=16$. Each GSE block consists of 8 rank-2 experts in total, which explicitly includes 1 generalized expert to anchor the foundational representations, and 7 specialized experts utilizing a Top-$2$ gating strategy. Scaling across the architecture, this configuration deploys a total of 356 generalized experts and 2492 specialized experts. The initialization strictly follows the SVD-based approach. For the action head, we use the MLP architecture and parallel decoding method from OpenVLA-OFT \citep{kim2025fine}.

% \textbf{Training Dataset and Environment.} 
% We evaluate our method on the LIBERO-Plus simulation benchmark and real world environment. The model is trained using the combined training sets from all 4 suites of LIBERO. 

\textbf{Optimization and Hyperparameters.} 
The model is trained from Qwen3-VL-4B-Instruct for a total of 80,000 optimization steps. The batch size per GPU is set to 16, resulting in an effective total batch size of 128 across 8 NVIDIA A100 GPUs. To ensure stable adaptation while preventing the catastrophic forgetting of pre-trained visual-semantic knowledge, we apply decoupled learning rates: the learning rate for the GSE parameters in VLM is set to $1 \times 10^{-5}$, while the newly initialized action MLP head uses a learning rate of $1 \times 10^{-4}$. Furthermore, the auxiliary load-balancing loss weight $\alpha$ is set to $0.01$.
We set the generalized scaling factor as 
$s_g = 2$ and set the $i$-th specialized scaling factor as 
$s_s^i = \frac{2 \sum_{j=1}^E \mathrm{Tr}(\Sigma_j)}{E \cdot \mathrm{Tr}(\Sigma_i)}$. 
In this way, the scaling of each specialized expert is inversely proportional to its spectral magnitude.
A comprehensive summary of the detailed hyperparameter settings and model configurations is provided in Appendix Table \ref{tab:hyperparameters}.

\textbf{Parameter Efficiency.} 
Training VLA-GSE is highly parameter-efficient. Among the 4,551.85M total model parameters, we freeze 4,437.82M non-GSE parameters from the base VLM and optimize only 114.04M parameters (2.51\% of the full model). Specifically, the trainable parameters consist of 48.41M parameters in the GSE modules and 65.62M parameters in the action head.

\subsection{LIBERO-Plus Benchmark Results}

We use LIBERO-Plus \citep{libero_plus} as the simulation benchmark, 
which evaluates VLA models' generalization ability across seven perturbation dimensions.
All models are trained using the combined training sets from all 4 suites of LIBERO \citep{libero}.
Table~\ref{1} shows that VLA-GSE achieves the best overall zero-shot performance, obtaining 81.2\% average success and outperforming the strongest prior baselines, including ABot-M0 (80.5\%) and VLANeXt (80.1\%). The improvement is broad rather than isolated: VLA-GSE delivers the strongest average robustness on Robot, Language, Light, and Background, while matching the best Layout performance. At the suite level, it attains 90.3\% on Spatial and 86.2\% on Object, and remains stable on the more challenging Goal and Long suites with 74.2\% and 74.1\%, respectively. Overall, these results indicate that VLA-GSE improves not only aggregate success rate, but also robustness under diverse test-time perturbations. The full suite-wise results of {VLA-GSE} are provided in Appendix~\ref{suitewise}.
\begin{table*}[t]
  \centering
  \caption{LIBERO-Plus benchmark zero-shot performance. Results are shown in success rate (\%).}
  \label{1}
  \resizebox{\textwidth}{!}{%
    \begin{tabular}{l|ccccccc|c}
      \toprule
      Model & Camera & Robot & Language & Light & Background & Noise & Layout & Total \\
      \midrule
      OpenVLA \citep{kim2025openvla}   & 0.8  & 3.5  & 23.0 & 8.1  & 34.8 & 15.2 & 28.5 & 15.6 \\
      WorldVLA \citep{cen2025worldvla}  & 0.1  & 27.9 & 41.6 & 43.7 & 17.1 & 10.9 & 38.0 & 25.0 \\
      NORA \citep{hung2025nora}      & 2.2  & 37.0 & 65.1 & 45.7 & 58.6 & 12.8 & 62.1 & 39.0 \\
      UniVLA \citep{bu2025univla}    & 1.8  & 46.2 & 69.6 & 69.0 & 81.0 & 21.2 & 31.9 & 42.9 \\
      $\pi_0$ \citep{black2024pi0}   & 13.8 & 6.0  & 58.8 & 85.0 & 81.4 & 79.0 & 68.9 & 53.6 \\
      $\pi_0$-Fast \citep{pertsch2025fast} & 65.1 & 21.6 & 61.0 & 73.2 & 73.2 & 74.4 & 68.8 & 61.6 \\
      OpenVLA-OFT \citep{kim2025fine} & 56.4 & 31.9 & 79.5 & 88.7 & 93.3 & 75.8 & 74.2 & 69.6 \\
      RIPT-VLA \citep{tan2025interactive} & 55.2 & 31.2 & 77.6 & 88.4 & 91.6 & 73.5 & 74.2 & 68.4 \\
      ABot-M0 \citep{yang2026abotm0} & 60.4 & 67.9 & 86.4 & 96.2 & 91.6 & 86.4 & \textbf{82.6} & 80.5 \\
      VLANeXt \citep{wu2026vlanext}  & \textbf{71.6} & 63.4 & 81.4 & 93.7 & 91.8 & \textbf{89.5} & 77.7 & 80.1 \\
      \textbf{VLA-GSE (Ours)} & 64.4 & \textbf{68.5} & \textbf{88.8} & \textbf{97.3} & \textbf{97.3} & 79.4 & \textbf{82.6} & \textbf{81.2} \\
      \bottomrule
    \end{tabular}
  }
\end{table*}

Table~\ref{tab:libero_plus_ft} compares fine-tuning strategies in LIBERO-Plus zero-shot generalization using the Qwen3-VL-4B-Instruct backbone. For a fair comparison, all PEFT methods use roughly 2.51\% (114M) trainable parameters and are independently tuned for optimal performance.
% \li{The parameter budgets are not strictly matched, so we may need to adjust the wording in the abstract and introduction.}
Notably, FFT (74.9\%) underperforms several PEFT baselines due to \textit{catastrophic forgetting} \citep{hancock2025actions,yu2026twinbrainvla}, where updating all parameters overwrites the backbone's generalized priors and degrades zero-shot capabilities. Our method, VLA-GSE, successfully mitigates this by balancing knowledge retention with efficient adaptation, achieving the highest overall success rate of {81.2\%}. It outperforms FFT by {6.3} points and the strongest baseline, GOAT, by {4.4} points on average in all seven zero-shot dimensions. These results demonstrate that VLA-GSE is significantly more robust and effective for VLA fine-tuning than conventional LoRA-style, SVD-based, or LoRA-MoE variants.

\begin{table*}[t]
  \centering
  \caption{Finetuning method comparison based on LIBERO-Plus benchmark zero-shot performance. Results are shown in success rate (\%).}
  \label{tab:libero_plus_ft}
  \resizebox{\textwidth}{!}{%
    \begin{tabular}{l|c|ccccccc|c}
      \toprule
      Model & Params (\%) & Camera & Robot & Language & Light & Background & Noise & Layout & Total \\
      \midrule
      FFT (Full Fine-Tuning) & 100.00 & 45.8 & 60.5 & 86.9 & 95.9 & 94.7 & 74.1 & 79.2 & 74.9 \\
      \midrule
      LoRA~\citep{hu2022lora} & 2.58 & 44.3 & 53.6 & 82.1 & 92.9 & 91.1 & 67.1 & 67.2 & 69.2 \\
      rsLoRA~\citep{kalajdzievski2023rslora} & 2.52 & 46.0 & 58.9 & 85.1 & 93.6 & 92.4 & 71.6 & 76.5 & 73.1 \\
      DoRA~\citep{liu2024dora} & 2.54 & 49.7 & 61.0 & 86.6 & 95.7 & 94.4 & 73.4 & 78.6 & 75.3 \\
      PiSSA~\citep{meng2024pissa} & 2.54 & 48.3 & 59.4 & 86.0 & 95.4 & 94.0 & 72.8 & 78.1 & 74.5 \\
      MoLoRA~\citep{zadouri2024pushing} & 2.56 & 52.2 & 61.5 & 87.5 & 96.0 & 95.0 & 74.2 & 79.0 & 76.2 \\
      AdaMoLE~\citep{liu2024adamole} & 2.53 & 50.2 & 60.8 & 87.8 & 95.8 & 94.4 & 73.1 & 78.6 & 75.5 \\
      HydraLoRA~\citep{tian2024hydralora} & 2.55 & 51.3 & 60.1 & 86.4 & 95.4 & 94.1 & 73.0 & 78.2 & 75.2 \\
      MiLoRA~\citep{wang2025milora} & 2.52 & 51.1 & 60.1 & 86.3 & 94.9 & 93.9 & 73.0 & 77.7 & 75.0 \\
      GOAT~\citep{fan2025goat} & 2.54 & 54.5 & 61.8 & 87.7 & 96.2 & 95.2 & 74.6 & 79.3 & 76.8 \\
      \textbf{VLA-GSE (Ours)} & 2.51 & \textbf{64.4} & \textbf{68.5} & \textbf{88.8} & \textbf{97.3} & \textbf{97.3} & \textbf{79.4} & \textbf{82.6} & \textbf{81.2} \\
      \bottomrule
    \end{tabular}
  }
\end{table*}

\begin{figure}[t]
    \centering
    \includegraphics[width=\textwidth]{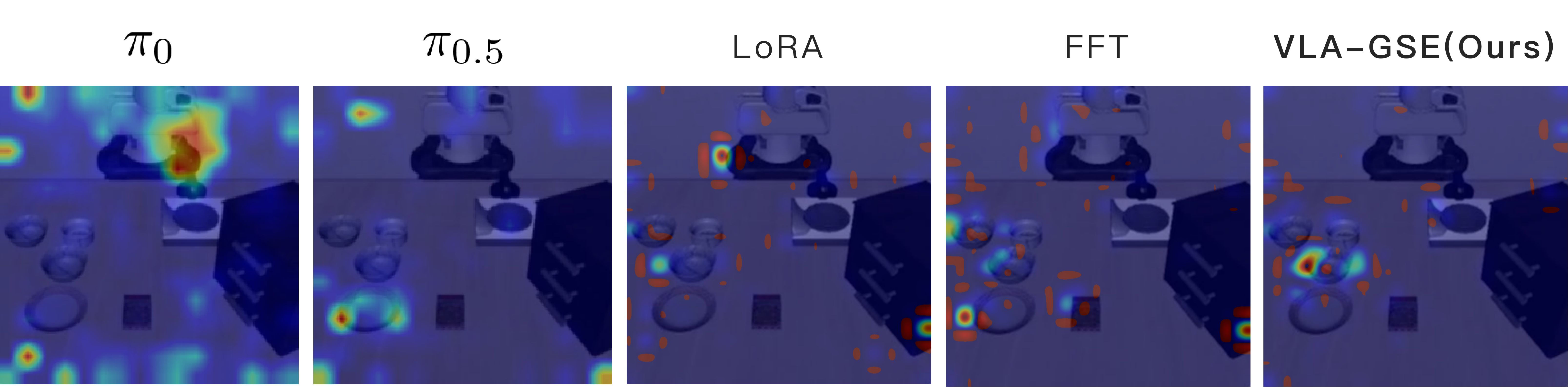} 
    \caption{Attention map visualizations for different models under the LIBERO-Plus \textbf{dark lighting} environment. The instruction is ``Pick up the black bowl between the plate and the ramekin and place it on the plate.'' Our VLA-GSE method exhibits precise attention on the \textbf{graspable edges} of the target object, whereas $\pi_{0}$, $\pi_{0.5}$, FFT, and LoRA show scattered attention or focus on incorrect objects.}
    \label{fig:attention_map}
\end{figure}

% and action execution\li{Is this wording accurate?}
% and final grasp outcomes
We qualitatively illustrate the attention maps in the challenging \textbf{dark lighting} setting of LIBERO-Plus. Figure~\ref{fig:attention_map} shows the last transformer layer attention maps for different models under the instruction: ``Pick up the black bowl between the plate and the ramekin and place it on the plate.'' Both FFT and LoRA are fine-tuned from the same Qwen3-VL-4B-Instruct backbone. VLA-GSE exhibits concentrated attention on the \textbf{graspable edges} of the correct target object and successfully completes the task, indicating that it not only identifies the target specified by the dense spatial instruction, but also captures the manipulation affordance required for grasping, namely the appropriate edge positions for the gripper. In contrast, the baselines show either scattered or mislocalized attention, leading to incorrect object selection or failed grasping. These results suggest that GSE better preserves the backbone's visual-semantic grounding while acquiring task-specific manipulation behavior. More qualitative examples are provided in Figure~\ref{fig:attention_map_appendix}.

\subsection{Knowledge Retention after VLA Fine-Tuning in Terms of Multimodal Understanding}
\label{app:qwen3vl_retention}

We further evaluate the base VLM and VLA-adapted variants on a diverse set of multimodal understanding benchmarks. As shown in Table~\ref{tab:mm_understanding_qwen}, FFT leads to a clear and consistent degradation across benchmarks, indicating severe catastrophic forgetting of pretrained vision--language capability during VLM-to-VLA transfer. A similar trend is also observed for co-trained VLA models such as $\pi_{0.5}$, whose multimodal understanding performance remains far from that of strong general-purpose VLMs, suggesting that preserving VLM knowledge is still challenging under FFT embodied training.

In contrast, both Qwen3-VL-LoRA and {VLA-GSE} remain consistently close to the base Qwen3-VL-4B-Instruct on general multimodal benchmarks, showing substantially better preservation of pretrained multimodal knowledge than FFT. Importantly, VLA-GSE achieves a retention level comparable to standard LoRA-based PEFT, which suggests that its stronger embodied performance does not come from sacrificing the original vision--language capability. Instead, VLA-GSE preserves pretrained VLM knowledge similarly well to LoRA, while providing stronger adaptation capacity for downstream robotic control as shown in Table~\ref{tab:libero_plus_ft}.

% We compare against Prismatic VLM, OpenVLA, ECoT, Gemma-3, MolmoAct, $\pi_{0.5}$, Qwen3-VL-4B-Instruct, Qwen3-VL-FFT and Qwen3-VL-LoRA.
\begin{table*}[t]
\centering
\caption{Multimodal understanding evaluation. Comparison of VLMs and VLAs across multimodal understanding benchmarks. }
\label{tab:mm_understanding_qwen}
\resizebox{\textwidth}{!}{
\begin{tabular}{l|c|cccccccc}
\toprule
Method & \#Params & MMMU & MMStar & OCRBench & MMB-en & DocVQA & InfoVQA & AI2D & RealWorldQA \\
\midrule
\multicolumn{10}{c}{Prismatic VLM Family} \\
\midrule
Prismatic VLM~\citep{karamcheti2024prismatic}      & 7B  & 35.0 & 38.8 & 32.0 & 66.2 & 17.5 & 19.7 & 54.6 & 30.8 \\
OpenVLA~\citep{kim2025openvla}                     & 7B  & 26.3 & 0    & 0    & 0    & 0    & 0    & 0    & 0 \\
ECoT~\citep{ecot2025}                               & 7B  & 26.6 & 0    & 0.01 & 3.7  & 0    & 0    & 0    & 25.6 \\
\midrule
\multicolumn{10}{c}{Gemma-3 Family} \\
\midrule
Gemma-3-4B-IT~\citep{gemma2025gemma3}              & 4B  & 39.3 & 37.1 & 70.2 & 68.6 & 68.8 & 40.9 & 70.5 & 44.0 \\
Gemma-3-12B-IT~\citep{gemma2025gemma3}             & 12B & 46.0 & 46.3 & 75.0 & 76.9 & 80.6 & 50.4 & 78.5 & 50.6 \\
VLM2VLA-AT~\citep{hancock2025actions}                     & 12B & 45.9 & 45.2 & 65.5 & 70.9 & 74.6 & 44.8 & 74.1 & 44.5 \\
VLM2VLA~\citep{hancock2025actions}                        & 12B & 42.7 & 48.0 & 63.9 & 68.5 & 78.4 & 46.2 & 74.0 & 43.3 \\
\midrule
\multicolumn{10}{c}{Other Baseline VLAs} \\
\midrule
MolmoAct~\citep{molmoact2025}                       & 7B  & 28.4 & 1.2  & 52.7 & 55.1 & 58.7 & 41.9 & 2.0  & 8.6 \\
$\pi_{0.5}$~\citep{physicalintelligence2025pi05}    & 3B  & 24.0 & 21.7 & 6.8  & 6.8  & 4.6  & 7.7  & 27.0 & 2.7 \\
\midrule
\multicolumn{10}{c}{Qwen3-VL Family} \\
\midrule
Qwen3-VL-4B-Instruct~\citep{qwen3vl2025}            & 4B & \textbf{53.2} & \textbf{69.8} & \textbf{88.1} & \textbf{85.1} & \textbf{95.3} & \textbf{80.3} & \textbf{84.1} & \textbf{70.9} \\
Qwen3-VL-FFT~\citep{qwen3vl2025}                    & 4B & 35.6 & 31.4 & 43.2 & 50.8 & 55.1 & 37.6 & 48.3 & 33.7 \\
Qwen3-VL-LoRA~\citep{qwen3vl2025}        & 4B & 51.8 & 68.1 & 83.3 & 83.0 & 94.1 & 79.2 & 83.0 & 69.8 \\
\textbf{VLA-GSE (Ours)}                             & 4B & 51.1 & 67.8 & 85.6 & 82.4 & 92.8 & 78.5 & 83.4 & 70.1 \\
\bottomrule
\end{tabular}}
\end{table*}

\subsection{Ablation Study}
To isolate the contributions of our core designs, we conduct an ablation study on the LIBERO-Plus zero-shot Long suite. 
The ``w.o. SVD Initialization'' variant instead uses a standard LoRA-style initialization, with each $B_s^i$ Gaussian initialized and each $A_s^i$ zero initialized.
For the expert routing ablations, we maintain a constant total number of experts to ensure a fair comparison of model capacity.
Specifically, the ``w.o. Generalized Experts'' setting replaces the generalized expert with an additional specialized expert, forcing the model to rely entirely on routing mechanisms. Conversely, the ``w.o. Specialized Experts'' variant replaces all specialized experts with generalized ones. The ``w.o. Auxiliary Loss'' variant removes the load-balancing objective during training. Finally, the ``w.o. Gradient Scaling'' variant replaces the expert-wise trace-inverse scaling in Eq.~\ref{eq:trace_inverse_scaling_main} with a constant scaling factor, setting all $s_s^i$ to 2.

Our full GSE framework achieves the highest overall success rate of 74.1\%, significantly outperforming all ablated variants. The most severe degradation occurs without SVD initialization. Replacing either expert type also causes notable drops, showing that merely increasing parameter capacity is insufficient: the generalized expert provides robust visual-semantic grounding and mitigates overfitting, while the specialized experts decouple complex manipulation strategies under high-variance conditions such as Noise and Layout. Removing the auxiliary loss reduces the overall success rate to 72.0\%, demonstrating that unconstrained routing can lead to expert collapse and suboptimal module utilization. The ``w.o. Gradient Scaling'' variant yields a comparable result of 72.1\%, suggesting that the proposed scaling initialization in Theorem~\ref{thm:scaling_factor} is also important for balancing expert-wise optimization dynamics; without it, uneven gradient magnitudes can cause certain experts to dominate training while others remain under-optimized.

\begin{table*}[t]
  \centering
  \caption{Ablation study on Long suite of LIBERO-Plus benchmark zero-shot performance. Results are shown in success rate (\%).}
  \resizebox{\textwidth}{!}{%
    \begin{tabular}{l|ccccccc|c}
      \toprule 
      Model & Camera & Robot & Language & Light & Background & Noise & Layout & Total \\
      \midrule
      w.o. SVD Initialization  & 25.8 & 41.9 & 80.8 & 84.8 & 88.1 & 51.0 & 70.3 & 60.9 \\
      w.o. Generalized Experts & 25.1 & 50.2 & 84.4 & 95.7 & 85.9 & 62.5 & 81.9 & 67.2 \\
      w.o. Specialized Experts & 24.0 & 48.8 & 82.8 & 88.7 & 90.6 & 52.1 & 71.4 & 63.1 \\
      w.o. Auxiliary Loss      & 32.2 & 59.7 & 90.0 & 91.8 & 96.8 & 62.9 & 85.4 & 72.0 \\
      w.o. Gradient Scaling    & 32.5 & 59.5 & 90.2 & 91.6 & 96.5 & 63.1 & 85.6 & 72.1 \\
            \midrule
       \textbf{VLA-GSE (Ours)} & \textbf{35.8} & \textbf{61.0} & \textbf{91.0} & \textbf{94.9} & \textbf{98.3} & \textbf{65.2} & \textbf{87.3} & \textbf{74.1} \\
      \bottomrule
    \end{tabular}%。IKD-GSE (Ours)
  }
\end{table*}

\section{Real-World Experiments}

\begin{figure*}[t]
    \centering
    \includegraphics[width=\textwidth]{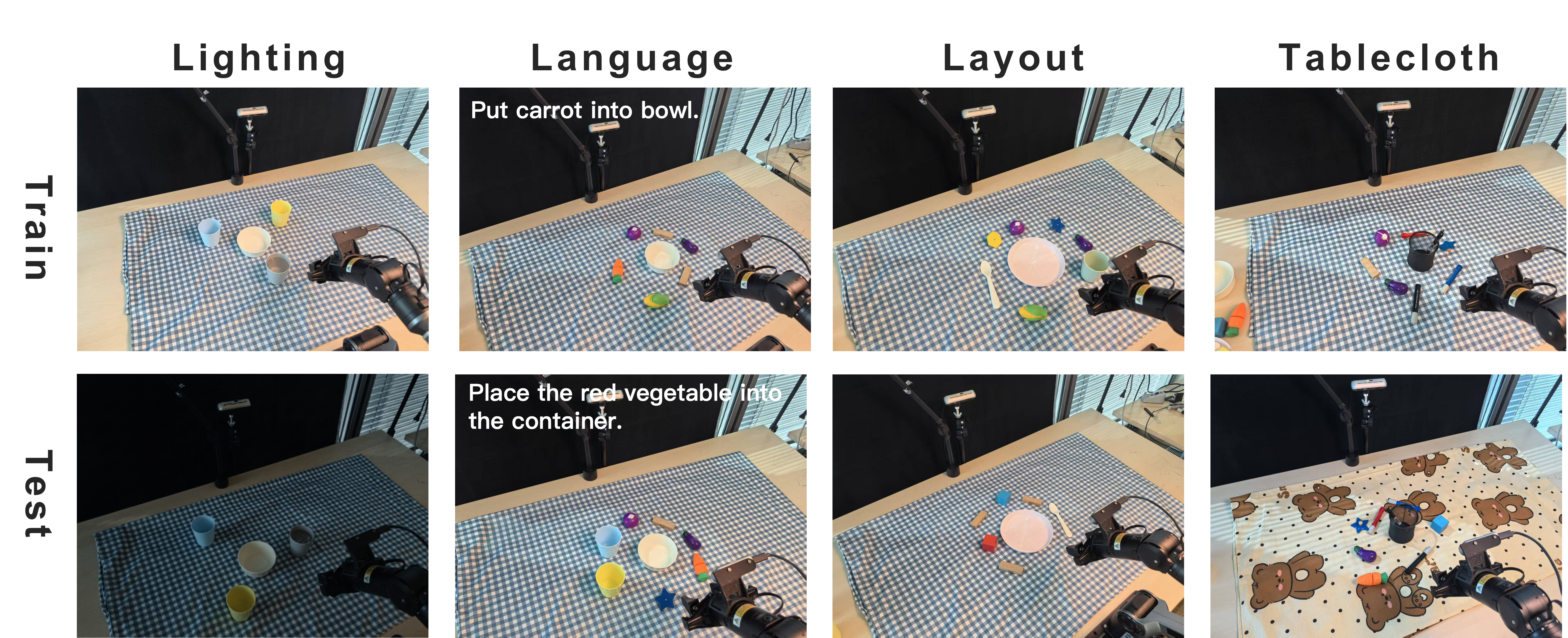
    }
    \caption{\textbf{Real-world generalization settings.}
    The four train-test pairs correspond to variations in \textit{lighting}, \textit{language}, \textit{clutter layout}, and \textit{background tablecloth}, respectively. 
    For \textit{language variations}, the training instruction is \textit{``Put carrot into bowl.''} 
At test time, we replace it with \textit{``Place the red vegetable into the container.''} 
    }
    \label{fig:real_world_settings}
\end{figure*}

\begin{table*}[t]
    \centering
    \caption{Results on \textbf{real-world} generalization. Each task contains 60 evaluation episodes, including 4 OOD settings: variations in \textit{lighting}, \textit{language}, \textit{clutter layout}, and \textit{background tablecloth}. Results are reported in success rate (\%).}
    \label{tab:real_world_results}
    \resizebox{\textwidth}{!}{%
    \begin{tabular}{l|cccc|c}
        \toprule
        Model & Pour water into bowl & Pick spoon into plate & Put carrot into bowl & Take pen out of pen container & Average \\
        \midrule
        OpenVLA-OFT \citep{kim2025fine}    & 38.3 & 41.7 & 48.3 & 43.3 & 42.9 \\
        $\pi_0$ \citep{black2024pi0}             & 51.7 & 55.0 & 60.0 & 56.7 & 55.8 \\
        $\pi_{0.5}$ \citep{physicalintelligence2025pi05} & 73.3 & 76.7 & 75.0 & 71.7 & 74.2 \\
        Qwen3-VL-LoRA \citepalias{starvla2025}   & 58.3 & 61.7 & 66.7 & 60.0 & 61.7 \\
        Qwen3-VL-FFT \citepalias{starvla2025}   & 63.3 & 66.7 & 70.0 & 63.3 & 65.8 \\
        \textbf{VLA-GSE (Ours)} & \textbf{81.7} & \textbf{85.0} & \textbf{83.3} & \textbf{80.0} & \textbf{82.5} \\
        \bottomrule
    \end{tabular}%
    }
\end{table*}

To further validate our framework, we conduct extensive real-world experiments on the AgileX PiPER robot. 
The robot is equipped with a fixed front-view camera and a wrist camera for visual observation.
As illustrated in Figure~\ref{fig:real_world_settings}, the real-world benchmark consists of four tasks: \textit{Pour water into bowl}, \textit{Pick spoon into plate}, \textit{Put carrot into bowl}, and \textit{Take pen out of pen container}. Each task is further evaluated under four types of distribution shifts: lighting changes, language variations, cluttered layout variations, and background tablecloth changes.
For each generalization dimension, we collect 15 test cases, resulting in 60 trials per task. 
% Detailed task description is shown in Appendix~\ref{}.

% and 240 trials in total.

Table~\ref{tab:real_world_results} summarizes the real-world generalization results. VLA-GSE achieves the best performance on all 4 tasks, yielding the highest average success rate of 82.5\%. Among the baselines, $\pi_{0.5}$ is the strongest, followed by Qwen3-VL-FFT, Qwen3-VL-LoRA, and $\pi_0$, while OpenVLA-OFT performs the worst. Notably, Qwen3-VL-LoRA consistently underperforms Qwen3-VL-FFT across all tasks, which aligns with the trend observed in Table~\ref{tab:libero_plus_ft}: parameter-efficient adaptation improves transferability, but FFT remains more effective than vanilla low-rank adaptation for acquiring robust action grounding. By contrast, VLA-GSE delivers a clear margin over all baselines, indicating that its generalized-specialized decomposition is better suited for handling real-world distribution shifts in illumination, background, clutter layout, and instruction phrasing. 
We also compare real-world inference time between VLA-GSE and baselines in Appendix~\ref{app:real_time}.

\section{Conclusion}

We present VLA-GSE, a parameter-efficient framework for adapting pre-trained VLMs to VLA policies that improves control adaptation while retaining the knowledge-preservation benefits of PEFT. VLA-GSE combines a generalized expert (shared adaptation path) with routed specialized experts, together with SVD-based initialization, expert-wise gradient scale balancing, and expectation-based backbone weight adjustment, to increase adaptation capacity for precise robotic control under a fixed trainable-parameter budget. 
Empirically, VLA-GSE consistently outperforms strong FFT and PEFT baselines on simulation and real-world manipulation benchmarks under a comparable trainable-parameter budget, while preserving pre-trained VLM capability comparably to LoRA and outperforming both FFT and co-trained baseline VLA models. Overall, these results position effective PEFT as a promising direction for VLM-to-VLA adaptation and as a practical, compute-efficient alternative to heavy co-training strategies that require substantial vision-language data for knowledge preservation.

%---------------------------------------

%We present VLA-GSE, a parameter-efficient framework for adapting pre-trained VLMs to VLA models, which retains the pretrained VLM’s knowledge and generalization capabilities while enabling accurate and efficient adaptation to downstream robotic tasks. By combining an always-active generalized expert with routed specialized experts, together with SVD-based initialization, expert-wise gradient scaling, expectation-based backbone weight adjustment, and auxiliary load balancing, VLA-GSE provides a principled and efficient solution to the representation degradation problem in VLM-to-VLA transfer. Empirically, it achieves strong gains over both FFT and prior PEFT baselines on LIBERO-Plus, while also transferring effectively to real-world manipulation under multiple distribution shifts.
% Overall, our results indicate that explicitly disentangling generalized knowledge transformation from task-specific control adaptation is a promising direction for building efficient and robust VLA systems.

% \bibliographystyle{Ref}  %plainnat,abbrvnat,unsrtnat
\bibliographystyle{plainnat}
\small
\bibliography{main}
\normalsize

\newpage
\appendix

\section{Technical Appendices and Supplementary Material}

\subsection{Limitations and Future Works}

A limitation of the current study is that our evaluation is still restricted to robotics-oriented settings, including embodied policy learning and real-world manipulation tasks, and we have not yet validated the proposed framework in other multimodal domains that also require high precision, such as medical image analysis, financial market forecasting, or ancient literature restoration.
Extending VLA-GSE to these also high-precision and multimodal domains is an important direction for future works, and would help clarify the generality of the generalized–specialized expert design beyond robotics. Despite this limitation, our main contribution is to present a simple, parameter-efficient, and effective adaptation framework that improves robustness and transfer in robotic VLA learning.

\subsection{Statement for Use of LLMs}

LLMs were only used to assist with language polishing in certain sections of this paper.

\subsection{Reproducibility Statement}

We provide open-source code to reproduce all experiments.
To facilitate reproducibility, we additionally include a minimal working example and environment setup instructions, enabling researchers to verify correctness and replicate our reported results with minimal effort.
% The complete anonymous repository is available at \url{https://anonymous.4open.science/r/VLA-GSE}.
Code is available at \url{https://github.com/YuhuaJiang2002/VLA-GSE}.

\subsection{Impact Statement}

This paper presents work whose goal is to advance the field of machine 
learning. 
We do not identify any specific impacts of this work that require particular emphasis here.

\section{Proof of Theorem~\ref{thm:scaling_factor}}
\label{sec:proof_scaling}

\begin{proof}
For the $i$-th specialized expert, its effective contribution to the aggregated
equivalent weight is
$w^i(x) W_s^i$, where
$W_s^i = s_s^i B_s^i A_s^i$.
Let
$g = \frac{\partial \mathcal{L}_{\text{final}}}{\partial W_{eq}(x)}$
denote the gradient of the loss with respect to the aggregated equivalent weight.
By the chain rule, the localized gradients with respect to $A_s^i$ and $B_s^i$ are
\begin{align}
g_A^i
&=
\frac{\partial \mathcal{L}_{\text{final}}}{\partial A_s^i}
=
\frac{\partial \bigl(w^i(x) W_s^i\bigr)}{\partial A_s^i}
\frac{\partial \mathcal{L}_{\text{final}}}{\partial \bigl(w^i(x) W_s^i\bigr)}
=
w^i(x)\, s_s^i (B_s^i)^\top g,
\label{eq:grad_A_appendix}
\\
g_B^i
&=
\frac{\partial \mathcal{L}_{\text{final}}}{\partial B_s^i}
=
\frac{\partial \mathcal{L}_{\text{final}}}{\partial \bigl(w^i(x) W_s^i\bigr)}
\frac{\partial \bigl(w^i(x) W_s^i\bigr)}{\partial B_s^i}
=
w^i(x)\, s_s^i g (A_s^i)^\top.
\label{eq:grad_B_appendix}
\end{align}

Using the initialization
\begin{equation}
B_s^i = \sqrt{\frac{1}{s_s^i}}\, U_i \Sigma_i^{1/2},
\qquad
A_s^i = \sqrt{\frac{1}{s_s^i}}\, \Sigma_i^{1/2} V_i^\top,
\label{eq:init_AB_appendix}
\end{equation}
we obtain
\begin{align}
g_A^i
&=
w^i(x)\, s_s^i
\left(
\sqrt{\frac{1}{s_s^i}}\, U_i \Sigma_i^{1/2}
\right)^\top g
=
w^i(x)\, \sqrt{s_s^i}\, \Sigma_i^{1/2} U_i^\top g,
\label{eq:localized_grad_A_appendix}
\\
g_B^i
&=
w^i(x)\, s_s^i
g
\left(
\sqrt{\frac{1}{s_s^i}}\, \Sigma_i^{1/2} V_i^\top
\right)^\top
=
w^i(x)\, \sqrt{s_s^i}\, g V_i \Sigma_i^{1/2}.
\label{eq:localized_grad_B_appendix}
\end{align}

We first analyze the squared Frobenius norm of $g_A^i$:
\begin{align}
\|g_A^i\|_F^2
&=
\mathrm{Tr}\!\left((g_A^i)^\top g_A^i\right)
\nonumber\\
&=
\mathrm{Tr}\!\left(
\left(w^i(x)\, \sqrt{s_s^i}\, \Sigma_i^{1/2} U_i^\top g\right)^\top
\left(w^i(x)\, \sqrt{s_s^i}\, \Sigma_i^{1/2} U_i^\top g\right)
\right)
\nonumber\\
&=
(w^i(x))^2 s_s^i \,
\mathrm{Tr}\!\left(
g^\top U_i \Sigma_i U_i^\top g
\right).
\label{eq:frob_A_appendix}
\end{align}
Taking expectation over the training batch distribution and using the cyclic property of the trace gives
\begin{align}
\mathbb{E}\!\left[\|g_A^i\|_F^2\right]
&=
s_s^i \,
\mathbb{E}\!\left[
(w^i(x))^2
\mathrm{Tr}\!\left(
g^\top U_i \Sigma_i U_i^\top g
\right)
\right].
\label{eq:EA_appendix_raw}
\end{align}
To analyze the expected squared gradient norm, we consider the idealized
balanced-routing condition and define
\begin{equation}
\rho_w := \mathbb{E}\!\left[(w^i(x))^2\right].
\label{eq:rho_w_appendix}
\end{equation}
Under balanced routing, $\rho_w$ is identical across experts and thus does not
depend on $i$. Therefore, when taking expectations of
$\|g_A^i\|_F^2$ or $\|g_B^i\|_F^2$, the routing term contributes only the same
multiplicative factor $\rho_w$ for all experts. As a result, it does not affect
the relative scaling across experts, and the trace-inverse scaling rule remains
unchanged.
Absorbing this shared routing
second-moment factor, we obtain
\begin{align}
\mathbb{E}\!\left[\|g_A^i\|_F^2\right]
&=
\rho_w \, s_s^i \,
\mathrm{Tr}\!\left(
U_i \Sigma_i U_i^\top \, \mathbb{E}[g g^\top]
\right)
\nonumber\\
&=
\rho_w \, s_s^i \,
\mathrm{Tr}\!\left(
\Sigma_i U_i^\top G_L U_i
\right).
\label{eq:EA_appendix}
\end{align}
Define
\begin{equation}
\alpha_i^L = \mathrm{Tr}\!\left(\Sigma_i U_i^\top G_L U_i\right).
\label{eq:alphaL_appendix}
\end{equation}
Then Eq.~\eqref{eq:EA_appendix} becomes
\begin{equation}
\mathbb{E}\!\left[\|g_A^i\|_F^2\right]
=
\rho_w \, s_s^i \, \alpha_i^L.
\label{eq:EA_compact_appendix}
\end{equation}

We next analyze $g_B^i$. From Eq.~\eqref{eq:localized_grad_B_appendix},
\begin{align}
\|g_B^i\|_F^2
&=
\mathrm{Tr}\!\left((g_B^i)^\top g_B^i\right)
\nonumber\\
&=
\mathrm{Tr}\!\left(
\left(w^i(x)\, \sqrt{s_s^i}\, g V_i \Sigma_i^{1/2}\right)^\top
\left(w^i(x)\, \sqrt{s_s^i}\, g V_i \Sigma_i^{1/2}\right)
\right)
\nonumber\\
&=
(w^i(x))^2 s_s^i \,
\mathrm{Tr}\!\left(
\Sigma_i V_i^\top g^\top g V_i
\right).
\label{eq:frob_B_appendix}
\end{align}
Taking expectation yields
\begin{align}
\mathbb{E}\!\left[\|g_B^i\|_F^2\right]
&=
s_s^i \,
\mathbb{E}\!\left[
(w^i(x))^2
\mathrm{Tr}\!\left(
\Sigma_i V_i^\top g^\top g V_i
\right)
\right]
\nonumber\\
&=
\rho_w \, s_s^i \,
\mathrm{Tr}\!\left(
\Sigma_i V_i^\top \mathbb{E}[g^\top g] V_i
\right)
\nonumber\\
&=
\rho_w \, s_s^i \,
\mathrm{Tr}\!\left(
\Sigma_i V_i^\top G_R V_i
\right).
\label{eq:EB_appendix}
\end{align}
Define
\begin{equation}
\alpha_i^R = \mathrm{Tr}\!\left(\Sigma_i V_i^\top G_R V_i\right).
\label{eq:alphaR_appendix}
\end{equation}
Then Eq.~\eqref{eq:EB_appendix} becomes
\begin{equation}
\mathbb{E}\!\left[\|g_B^i\|_F^2\right]
=
\rho_w \, s_s^i \, \alpha_i^R.
\label{eq:EB_compact_appendix}
\end{equation}

Equations~\eqref{eq:EA_compact_appendix} and~\eqref{eq:EB_compact_appendix}
prove the first part of the theorem under the idealized balanced-routing
assumption at initialization. Since $\rho_w$ is shared across experts, it does
not affect the relative scaling across experts, and thus does not change the
trace-inverse scaling rule derived below.

We now prove the trace-inverse scaling rule.
Assume there exist constants $\kappa_L,\kappa_R > 0$ such that\footnote{This assumption is automatically satisfied under $G_L =\mathbb{E}[g g^\top] \propto I$ and $G_R =\mathbb{E}[g^\top g] \propto I$, which is the commonly used isotropic gradient variance assumption \citep[used in Theorem 1 of][]{jastrzebski2018three}.}
\begin{equation}
\alpha_i^L = \kappa_L \, \mathrm{Tr}(\Sigma_i),
\qquad
\alpha_i^R = \kappa_R \, \mathrm{Tr}(\Sigma_i),
\qquad
\forall i \in \{1,\dots,E\}.
\label{eq:projected_uniformity_appendix}
\end{equation}
Substituting Eq.~\eqref{eq:projected_uniformity_appendix} into
Eqs.~\eqref{eq:EA_compact_appendix} and~\eqref{eq:EB_compact_appendix}, we obtain
\begin{align}
\mathbb{E}\!\left[\|g_A^i\|_F^2\right]
&=
\rho_w s_s^i \kappa_L \, \mathrm{Tr}(\Sigma_i),
\\
\mathbb{E}\!\left[\|g_B^i\|_F^2\right]
&=
\rho_w s_s^i \kappa_R \, \mathrm{Tr}(\Sigma_i).
\end{align}
Therefore, if we choose
\begin{equation}
s_s^i
=
s_{\mathrm{base}} \cdot \frac{C}{\mathrm{Tr}(\Sigma_i)},
\label{eq:trace_inverse_scaling_appendix}
\end{equation}
then it follows that
\begin{align}
\mathbb{E}\!\left[\|g_A^i\|_F^2\right]
&=
\rho_w s_{\mathrm{base}} C \kappa_L,
\\
\mathbb{E}\!\left[\|g_B^i\|_F^2\right]
&=
\rho_w s_{\mathrm{base}} C \kappa_R,
\end{align}
which are independent of $i$.
Hence, the expected squared Frobenius norms of the localized gradients are equalized across all specialized experts.
This proves the theorem.
\end{proof}

\begin{proposition}[Isotropic global gradient variance implies projected uniformity]
\label{prop:isotropic_projected_uniformity}
A sufficient condition for the projected-uniformity condition in
Eq.~\eqref{eq:projected_uniformity_appendix} is that the global gradient
second moments are isotropic, namely
\begin{equation}
G_L = \mathbb{E}[g g^\top] = c_L I,
\qquad
G_R = \mathbb{E}[g^\top g] = c_R I,
\end{equation}
for some constants $c_L, c_R > 0$.
Under this assumption, for every specialized expert $i$,
\begin{equation}
\alpha_i^L = c_L \, \mathrm{Tr}(\Sigma_i),
\qquad
\alpha_i^R = c_R \, \mathrm{Tr}(\Sigma_i),
\end{equation}
where
\begin{equation}
\alpha_i^L = \mathrm{Tr}\!\left(\Sigma_i U_i^\top G_L U_i\right),
\qquad
\alpha_i^R = \mathrm{Tr}\!\left(\Sigma_i V_i^\top G_R V_i\right).
\end{equation}
Therefore, Eq.~\eqref{eq:projected_uniformity_appendix} holds with
$\kappa_L = c_L$ and $\kappa_R = c_R$.
\end{proposition}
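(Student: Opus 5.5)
The plan is to substitute the isotropy assumption directly into the definitions of $\alpha_i^L$ and $\alpha_i^R$ and simplify using only the orthonormality of singular vectors. The single structural fact needed is that $U_i = U_{:,\mathcal{I}_i}$ and $V_i = V_{:,\mathcal{I}_i}$ consist of $d$ columns drawn from the orthogonal factors of the SVD $W_0 = U\Sigma V^\top$. Hence
\[
U_i^\top U_i = I_d,
\qquad
V_i^\top V_i = I_d .
\]
This is where the disjoint contiguous index blocks $\mathcal{I}_i$ enter. Each block selects genuine singular vectors, so orthonormality holds no matter which segment the expert receives.

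For the left factor, I would insert $G_L = c_L I_m$ to get
\[
\alpha_i^L = \mathrm{Tr}\!\left(\Sigma_i U_i^\top (c_L I_m) U_i\right) = c_L\,\mathrm{Tr}\!\left(\Sigma_i U_i^\top U_i\right) = c_L\,\mathrm{Tr}(\Sigma_i).
\]
The right factor is handled the same way, with $G_R = c_R I_n$ and $V_i^\top V_i = I_d$, giving $\alpha_i^R = c_R\,\mathrm{Tr}(\Sigma_i)$.

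Since $c_L$ and $c_R$ do not depend on $i$, this is exactly the projected-uniformity condition Eq.~\eqref{eq:projected_uniformity_appendix} with $\kappa_L = c_L$ and $\kappa_R = c_R$. Positivity of $c_L$ and $c_R$ gives $\kappa_L, \kappa_R > 0$, as Theorem~\ref{thm:scaling_factor} requires.

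There is no real obstacle here. The only points to check are the dimensions: $G_L$ is $m\times m$, matching $U_i \in \mathbb{R}^{m\times d}$, and $G_R$ is $n\times n$, matching $V_i \in \mathbb{R}^{n\times d}$. One should also confirm that the identity matrices are of the right sizes, so that the orthonormality relations apply.
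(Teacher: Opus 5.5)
Your proposal is correct and follows the paper's proof: substitute $G_L = c_L I$ and $G_R = c_R I$ into the definitions of $\alpha_i^L$ and $\alpha_i^R$, then use the orthonormality $U_i^\top U_i = V_i^\top V_i = I_d$ of the selected singular-vector blocks to reduce each trace to $c\,\mathrm{Tr}(\Sigma_i)$. Your explicit dimension check ($G_L$ is $m\times m$, $G_R$ is $n\times n$) is a harmless addition that the paper leaves implicit.
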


\begin{proof}
Since $U_i$ and $V_i$ are composed of orthogonal singular vectors, they satisfy
$U_i^\top U_i = I$ and $V_i^\top V_i = I$.
Substituting $G_L = c_L I$ into $\alpha_i^L$ gives
\begin{equation}
\alpha_i^L
=
\mathrm{Tr}\!\left(\Sigma_i U_i^\top (c_L I) U_i\right)
=
c_L \, \mathrm{Tr}\!\left(\Sigma_i U_i^\top U_i\right)
=
c_L \, \mathrm{Tr}(\Sigma_i).
\end{equation}
Similarly, substituting $G_R = c_R I$ into $\alpha_i^R$ yields
\begin{equation}
\alpha_i^R
=
\mathrm{Tr}\!\left(\Sigma_i V_i^\top (c_R I) V_i\right)
=
c_R \, \mathrm{Tr}\!\left(\Sigma_i V_i^\top V_i\right)
=
c_R \, \mathrm{Tr}(\Sigma_i).
\end{equation}
Hence the projected-uniformity condition is satisfied exactly with
$\kappa_L = c_L$ and $\kappa_R = c_R$.
This isotropic case is a standard analytically tractable setting in the
optimization literature for studying gradient-based methods
\citep[see Theorem 1 of][]{jastrzebski2018three}.
\end{proof}

\section{Proof of Expectation-Based Backbone Weight Adjustment}
\label{proof2}

\begin{proof}
Recall that the effective weight of the proposed module is
\begin{equation}
W_{eq}(x)
=
\tilde{W}_0
+
s_g B_g A_g
+
\sum_{i=1}^{E} w^i(x)\, s_s^i B_s^i A_s^i,
\end{equation}
where $\tilde{W}_0$ is defined as
\begin{equation}
\tilde{W}_0
=
W_0
-
s_g B_g A_g
-
\frac{1}{E}\sum_{i=1}^{E} s_s^i B_s^i A_s^i.
\end{equation}

Under the uniform-routing assumption at initialization, each specialized expert is selected with equal probability in expectation. Hence,
\begin{equation}
\mathbb{E}_x[w^i(x)] = \frac{1}{E},
\qquad i=1,\dots,E.
\end{equation}
Therefore, taking expectation over the input-dependent routing gives
\begin{equation}
\begin{aligned}
\mathbb{E}_x[W_{eq}(x)]
&=
\tilde{W}_0
+
s_g B_g A_g
+
\mathbb{E}_x\!\left[\sum_{i=1}^{E} w^i(x)\, s_s^i B_s^i A_s^i\right] \\
&=
\tilde{W}_0
+
s_g B_g A_g
+
\sum_{i=1}^{E} \mathbb{E}_x[w^i(x)]\, s_s^i B_s^i A_s^i \\
&=
\tilde{W}_0
+
s_g B_g A_g
+
\frac{1}{E}\sum_{i=1}^{E} s_s^i B_s^i A_s^i.
\end{aligned}
\end{equation}
Substituting the definition of $\tilde{W}_0$ yields
\begin{equation}
\mathbb{E}_x[W_{eq}(x)] = W_0.
\end{equation}
Thus, the effective weight $W_{eq}(x)$ is aligned with the original backbone weight $W_0$ in expectation at initialization.
\end{proof}

\section{More Experiment Results}

\subsection{More Qualitative Evaluation in Generalization Scenarios}

\begin{figure}[htbp]
    \centering
    \includegraphics[width=\textwidth]{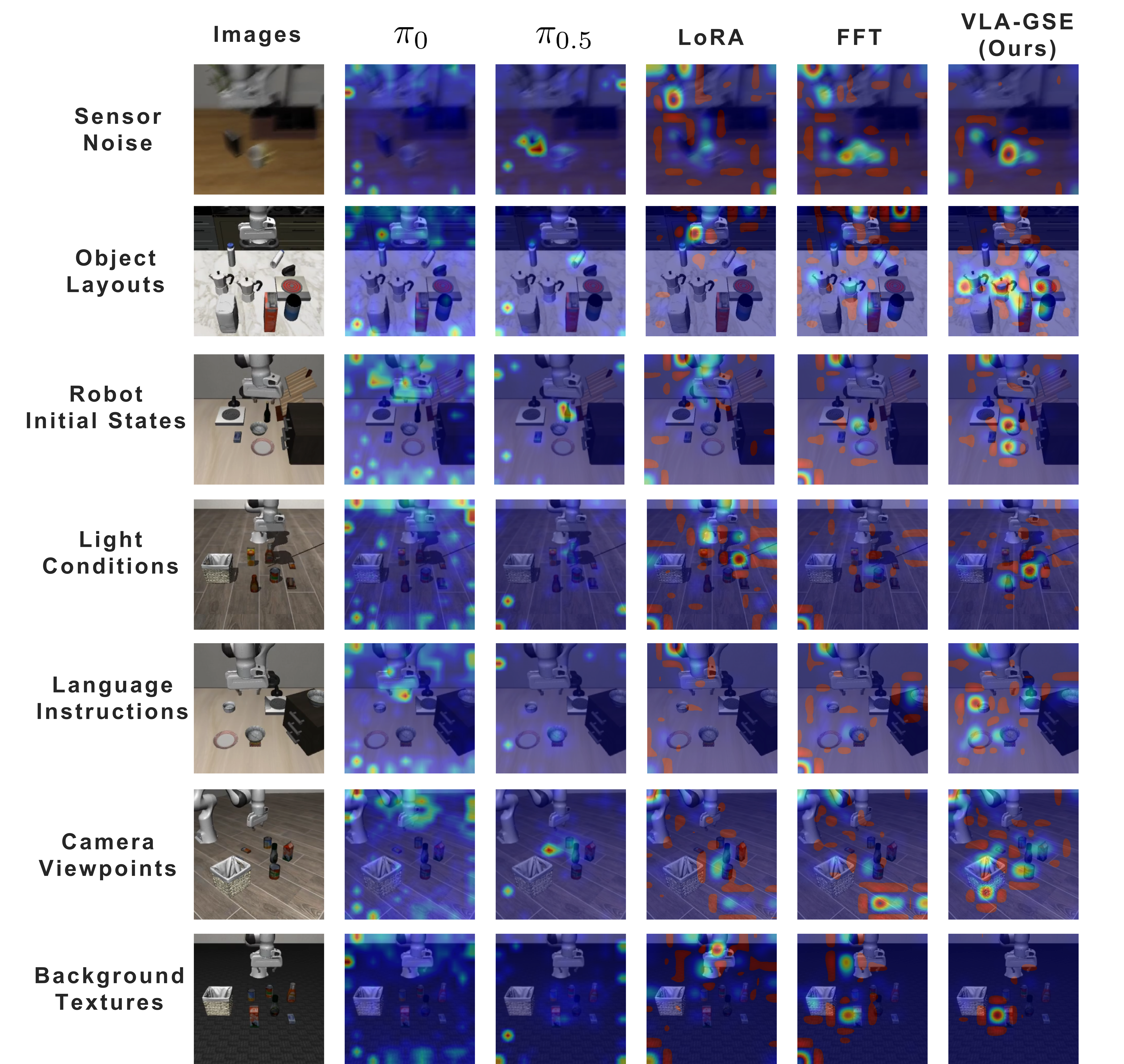} 
    \caption{Extended attention map visualizations under seven diverse generalization scenarios. As a comprehensive supplement to the main text, this figure demonstrates the robustness of our VLA-GSE method across variations in Camera Viewpoints, Object Layouts, Robot Initial States, Light Conditions, Language Instructions, Sensor Noise, and Background Textures.
    All attention maps are taken from the last transformer layer of the models. 
    GSE consistently maintains precise attention on the graspable edges of target objects, while baseline models ($\pi_{0}$, $\pi_{0.5}$, LoRA, and FFT) suffer from severe visual distractions or become unfocused under domain shifts.}
    \label{fig:attention_map_appendix}
\end{figure}

As a comprehensive supplement to the qualitative evaluation presented in the main text, Figure~\ref{fig:attention_map_appendix} provides an extensive visualization of model attention maps across a wider range of out-of-distribution (OOD) generalization scenarios. Specifically, we evaluate the visual grounding robustness of all models under seven challenging axes of variation: Camera Viewpoints, Object Layouts, Robot Initial States, Light Conditions, Language Instructions, Sensor Noise, and Background Textures. 
All attention maps are taken from the last transformer layer of the models.

The results in Figure~\ref{fig:attention_map_appendix} consistently corroborate our previous findings and further highlight the exceptional generalization capabilities of our proposed GSE method. Across all seven diverse visual domain shifts, GSE stably and precisely focuses its attention on the relevant target objects and their critical graspable edges. This remarkable robustness empirically validates our core motivation: by effectively preserving the broad, pre-trained visual knowledge of the base Qwen3-VL model, GSE inherits its strong invariance to environmental variations (such as lighting changes, background shifts, and sensor noise). Concurrently, the efficiently learned manipulation skills allow GSE to accurately pinpoint the optimal affordance regions regardless of the scene layout or camera angle. 

Conversely, the baseline models exhibit significant fragility when faced with these generalization scenarios. $\pi_{0}$ consistently demonstrates scattered and uninformative attention distributions across all settings. While $\pi_{0.5}$, LoRA, and FFT show localized attention in some specific cases, their performance drastically degrades under severe domain shifts (e.g., sensor noise or complex background textures). In these challenging settings, the baselines either completely lose track of the objects or severely overfit to spurious background features and incorrect distractors. This widespread failure among the standard fine-tuning and basic PEFT baselines strongly implies that they suffer from catastrophic forgetting, losing the base VLM's inherent visual robustness during the adaptation to robotic tasks. In contrast, GSE successfully bridges the gap between robust, general-purpose visual understanding and precise, task-specific action execution.

% \begin{table}[t]
%     \centering
%     \caption{Real-world inference time comparison on an NVIDIA RTX 5080 GPU. We report the wall-clock time for generating one action chunk (batch size 1). Lower is better.}
%     \label{tab:real_time_comparison}
%     \begin{tabular}{lc}
%         \toprule
%         Model & Time per action chunk (ms) \\
%         \midrule
%         Qwen3-VL-FFT & \textbf{88.9} $\pm$ 1.8 \\
%         $\pi_0$ & 122.8 $\pm$ 2.6 \\
%         $\pi_{0.5}$ & 129.5 $\pm$ 2.7 \\
%         OpenVLA-OFT & 96.6 $\pm$ 2.1 \\
%         \textbf{VLA-GSE (Ours)} & 94.7 $\pm$ 2.0 \\
%         \bottomrule
%     \end{tabular}
% \end{table}

\subsection{Comparison of VLA Models on LIBERO Results}

\begin{table}[t]
\centering
\caption{LIBERO simulation benchmark results. Success rates (SR, \%) across the four LIBERO task suites.}
\label{tab:libero_sim}
\begin{tabular}{l|cccc|c}
\toprule
Method & Spatial & Object & Goal & Long & Average \\
\midrule
Diffusion Policy~\citep{Chi-RSS-23} & 78.3 & 92.5 & 68.3 & 50.5 & 72.4 \\
Dita~\citep{Hou_2025_ICCV} & 97.4 & 94.8 & 93.2 & 83.6 & 92.3 \\
$\pi_0$~\citep{BlackK-RSS-25} & 96.8 & 98.8 & 95.8 & 85.2 & 94.2 \\
$\pi_{0.5}$~\citep{physicalintelligence2025pi05} & 98.8 & 98.2 & 98.0 & 92.4 & 96.9 \\
FLOWER~\citep{pmlr-v305-reuss25a} & 97.5 & 99.1 & 96.1 & 94.9 & 96.9 \\
GR00T-N1.6~\citep{nvidia2025grootn16} & 97.7 & 98.5 & 97.5 & 94.4 & 97.0 \\
OpenVLA-OFT~\citep{kim2025fine} & 97.6 & 98.4 & 97.9 & 94.5 & 97.1 \\
CogVLA~\citep{li2025cogvla} & 98.6 & 98.8 & 96.6 & 95.4 & 97.4 \\
VLANeXt~\citep{wu2026vlanext} & \textbf{99.0} & 99.2 & 96.6 & 94.6 & 97.4 \\
X-VLA~\citep{zheng2025x} & 98.2 & 98.6 & 97.8 & \textbf{97.6} & 98.1 \\
\textbf{VLA-GSE (Ours)} & 98.8 & \textbf{99.8} & \textbf{98.2} & 96.8 & \textbf{98.4} \\
\bottomrule
\end{tabular}
\end{table}

Table~\ref{tab:libero_sim} reports the simulation results on LIBERO. VLA-GSE achieves the best overall performance, reaching an average success rate of 98.4\%. Even after including strong recent baselines such as FLOWER, GR00T-N1.6, VLANeXt, and X-VLA, our method still ranks first overall, outperforming the strongest baseline, X-VLA, by 0.3 points in average success rate. VLA-GSE also establishes new best results on Object (99.8\%) and Goal (98.2\%), while remaining highly competitive on Spatial (98.8\%) and Long (96.8\%).

These results suggest that GSE provides a strong balance between fine-grained task adaptation and robustness across heterogeneous task suites. In particular, compared with X-VLA, the gains on Object and Goal are +1.2 and +0.4 points, respectively, indicating that the proposed generalized-specialized expert decomposition improves task-specific adaptation without sacrificing cross-suite generalization. Although VLANeXt attains the best Spatial score and X-VLA achieves the best Long score, VLA-GSE delivers the highest average performance, demonstrating the strongest overall trade-off across the four LIBERO suites. Overall, the results show that parameter-efficient adaptation with generalized and specialized experts can remain competitive with, and often surpass, strong recent VLA baselines without requiring full-model fine-tuning.

\subsection{Comparison of Fine-Tuning Methods on LIBERO Results}

\begin{table}[t]
\centering
\caption{Fine-tuning method comparison on the standard LIBERO benchmark. Results are reported as success rate (\%) across the four task suites. All PEFT methods are configured with comparable trainable parameter budgets, following the setting in Table~\ref{tab:libero_plus_ft}.}
\label{tab:libero_ft}
\begin{tabular}{l|c|cccc|c}
\toprule
Method & Params (\%) & Spatial & Object & Goal & Long & Average \\
\midrule
FFT (Full Fine-Tuning)                                & 100.00 & 97.8 & 98.8 & 96.4 & 94.2 & 96.8 \\
LoRA~\citep{hu2022lora}                & 2.58 & 94.4 & 96.8 & 90.4 & 78.4 & 90.0 \\
MiLoRA~\citep{wang2025milora}          & 2.52 & 97.0 & 98.2 & 95.0 & 84.2 & 93.6 \\
PiSSA~\citep{meng2024pissa}            & 2.54 & 96.0 & 97.4 & 93.6 & 89.8 & 94.2 \\
rsLoRA~\citep{kalajdzievski2023rslora} & 2.52 & 96.4 & 98.0 & 94.0 & 90.8 & 94.8 \\
HydraLoRA~\citep{tian2024hydralora}    & 2.55 & 96.6 & 98.0 & 94.2 & 91.2 & 95.0 \\
DoRA~\citep{liu2024dora}               & 2.54 & 96.6 & 98.2 & 94.4 & 91.6 & 95.2 \\
AdaMoLE~\citep{liu2024adamole}         & 2.53 & 96.8 & 98.2 & 94.8 & 91.8 & 95.4 \\
MoLoRA~\citep{zadouri2024pushing}      & 2.56 & 97.2 & 98.4 & 95.2 & 92.4 & 95.8 \\
GOAT~\citep{fan2025goat}               & 2.54 & 97.4 & 98.6 & 95.8 & 93.0 & 96.2 \\
\textbf{VLA-GSE (Ours)}                & 2.51 & \textbf{98.8} & \textbf{99.8} & \textbf{98.2} & \textbf{96.8} & \textbf{98.4} \\
\bottomrule
\end{tabular}
\end{table}

Table~\ref{tab:libero_ft} reports the results on the standard LIBERO benchmark. 
Overall, the ranking is broadly consistent with that on LIBERO-Plus: stronger parameter-efficient fine-tuning methods on zero-shot generalization also tend to achieve better performance on the in-distribution LIBERO benchmark. 
Vanilla low-rank adaptation, such as LoRA, already provides a competitive baseline, while more advanced variants, including DoRA, AdaMoLE, MoLoRA, and GOAT, further improve the average success rate, indicating that a richer adaptation space is beneficial even under the standard simulation setting.

Among all baselines, FFT remains highly competitive, achieving an average success rate of 96.8\%. 
Nevertheless, VLA-GSE still delivers the strongest overall performance, reaching {98.4\%} average success rate and ranking first on all four suites. 
Compared with FFT, our method improves Spatial from 97.8\% to {98.8\%}, Object from 98.8\% to {99.8\%}, Goal from 96.4\% to {98.2\%}, and Long from 94.2\% to {96.8\%}, corresponding to gains of {1.0}, {1.0}, {1.8}, and {2.6} points, respectively. 
The largest improvement is observed on the Long suite, suggesting that VLA-GSE is particularly effective for long-horizon manipulation requiring sustained temporal consistency and accurate multi-step action composition.

Compared with the strongest PEFT baseline, GOAT, VLA-GSE improves the average success rate from 96.2\% to {98.4\%}, yielding a margin of {2.2} points. 
This advantage is again most pronounced on the more challenging Goal and Long suites, where our method outperforms GOAT by {2.4} and {3.8} points, respectively. 
Notably, Object is already close to saturation for most competitive methods, whereas Long still exhibits a clear performance spread. 
This pattern indicates that long-horizon sequential manipulation remains the main differentiating factor once short-horizon perception and control are well handled. 
Overall, these results provide additional evidence that explicitly combining generalized transferable adaptation with specialized experts yields a more effective fine-tuning mechanism for VLAs than both conventional LoRA-style methods and existing LoRA-MoE variants.

\subsection{Real-World Inference Time Comparison}
\label{app:real_time}

\begin{figure}[t]
    \centering
    \includegraphics[width=0.75\linewidth]{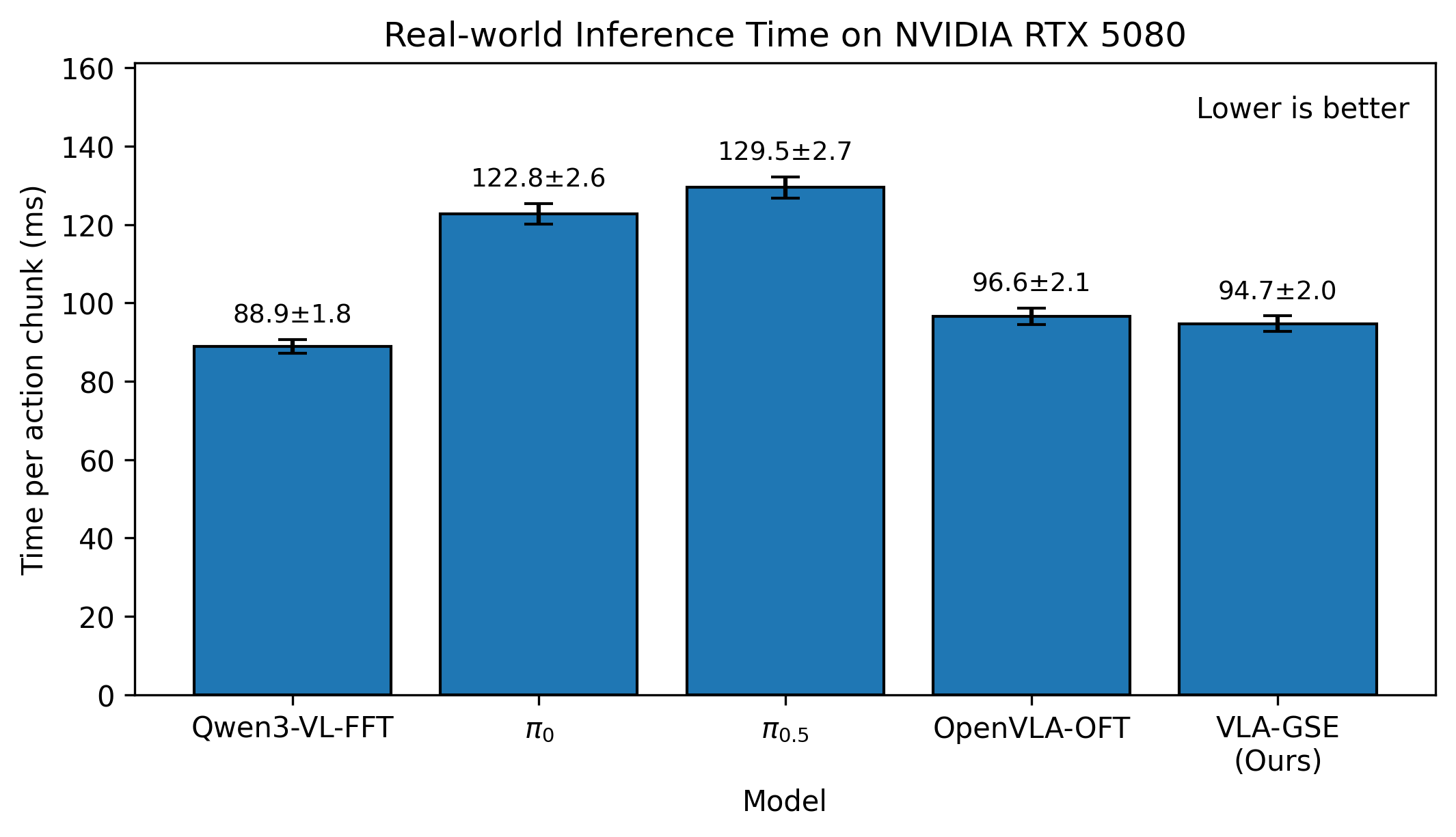}
    \caption{Real-world inference latency comparison on an NVIDIA RTX 5080 GPU. We report the wall-clock time for generating one action chunk with batch size 1. Error bars denote standard deviation over 120 runs. Lower is better.}
    \label{fig:real_time_comparison}
\end{figure}

We further compare the real-world inference efficiency of different policies by measuring the wall-clock latency required to generate one action chunk. All methods are evaluated with batch size 1 under the same deployment pipeline on the same real-robot workstation equipped with an NVIDIA RTX 5080 GPU. We measure the elapsed time from receiving the current observation to returning the full action chunk, and report the average latency over 120 runs.

As shown in Figure~\ref{fig:real_time_comparison}, Qwen3-VL-FFT achieves the lowest latency, while VLA-GSE is the second fastest method among the compared policies. The most controlled comparison is between VLA-GSE and Qwen3-VL-FFT, since both are built on the same Qwen3-VL-4B backbone and mainly differ in the adaptation strategy. Under this matched setting, the latency gap indicates that the proposed generalized-specialized expert design introduces only a modest additional inference overhead.

We note, however, that absolute latency comparisons across different policy families should be interpreted with caution. In particular, the lower latency of VLA-GSE relative to $\pi_0$ and $\pi_{0.5}$ is not solely due to the proposed adapter design, but also reflects differences in the action-generation paradigm: VLA-GSE adopts OFT-style parallel decoding, whereas $\pi_0$ and $\pi_{0.5}$ rely on flow-matching-based generation, which is inherently more expensive at inference time. Similarly, the lower latency of VLA-GSE relative to OpenVLA-OFT should not be over-interpreted as a pure architectural advantage of GSE, since OpenVLA-OFT is built on a larger 7B backbone, whereas VLA-GSE uses a 4B backbone. Therefore, these cross-model comparisons are best viewed as end-to-end deployment measurements rather than clean isolation of module-level efficiency.

Overall, the main takeaway from Figure~\ref{fig:real_time_comparison} is that, relative to the matched Qwen3-VL-FFT baseline, VLA-GSE improves policy performance while incurring only a small additional inference-time cost, while remaining practically efficient for real-world deployment.

\subsection{Suite-wise Results on LIBERO-Plus}
\label{suitewise}

For compact comparison, tables in the main paper reports only the averaged LIBERO-Plus performance of {VLA-GSE}. Here, we provide its detailed zero-shot results on the four LIBERO suites: \textit{Spatial}, \textit{Object}, \textit{Goal}, and \textit{Long}.

As shown in Table~\ref{tab:libero_plus_suitewise_vlagse}, {VLA-GSE} achieves strong performance across all four suites, with the best overall results on \textit{Spatial} (90.3) and competitive performance on the more challenging \textit{Goal} and \textit{Long} suites. Notably, the suite-wise breakdown shows that the strong averaged performance in the main paper is not dominated by a single suite, but is supported by consistently high robustness across diverse distribution shifts, including camera, robot embodiment, language, lighting, background, noise, and layout variations. These results further demonstrate that {VLA-GSE} generalizes well across different task structures and evaluation conditions in LIBERO-Plus.

\begin{table*}[t]
  \centering
  \caption{Suite-wise zero-shot performance of {VLA-GSE} on LIBERO-Plus. Results are shown in success rate (\%).}
  \label{tab:libero_plus_suitewise_vlagse}
  \resizebox{\textwidth}{!}{%
    \begin{tabular}{l|ccccccc|c}
      \toprule
      Suite & Camera & Robot & Language & Light & Background & Noise & Layout & Total \\
      \midrule
      Spatial & 89.5 & 80.6 & 89.6 & 94.8 & 96.4 & 89.2 & 95.2 & 90.3 \\
      Object  & 80.5 & 65.3 & 95.1 & 100.0 & 98.4 & 86.6 & 85.1 & 86.2 \\
      Goal    & 51.7 & 67.2 & 79.3 & 99.6 & 96.1 & 76.5 & 62.8 & 74.2 \\
      Long    & 35.8 & 61.0 & 91.0 & 94.9 & 98.3 & 65.2 & 87.3 & 74.1 \\
      \midrule
      Average & 64.4 & {68.5} & {88.8} & {97.3} & {97.3} & 79.4 & {82.6} &{81.2} \\
      \bottomrule
    \end{tabular}
  }
\end{table*}

\subsection{Training Details, Hyperparameters, and Task Description}
\label{sec:appendix_hyperparameters}

\paragraph{Training Details and Hyperparameters}
Table~\ref{tab:hyperparameters} details the comprehensive set of hyperparameters and training configurations used for fine-tuning the Qwen3-VL-4B-Instruct VLM with our proposed VLA-GSE framework. 

\paragraph{Real-World Task Description and Hyperparameters}
For the real-world experiments, we evaluate four representative long-horizon manipulation tasks: \textit{pour water into bowl}, which requires stable container grasping and controlled pouring; \textit{pick spoon into plate}, which involves precise grasping and object placement; \textit{put carrot into bowl}, which tests visually grounded pick-and-place under object-level spatial variation; and \textit{take pen out of pen container}, which requires extracting a slender pen from a cylindrical container. 
For each task, we collect 70 in-distribution demonstrations via teleoperation. 
Each task contains 60 evaluation episodes, covering four out-of-distribution (OOD) generalization settings: variations in \textit{lighting}, which change scene illumination conditions; \textit{language}, which rephrase the task instruction; \textit{clutter layout}, which modify the arrangement of surrounding objects; and \textit{background tablecloth}, which alter the background appearance of the workspace. These settings are designed to evaluate the robustness of the learned policy under realistic visual and linguistic distribution shifts.

During training, we set the learning rate for the VLM and GSE parameters to $1 \times 10^{-5}$, and the learning rate for the action MLP head to $1 \times 10^{-4}$.
The policy for \textit{pour water into bowl} is trained for 8000 iterations, whereas the policies for the other three tasks are each trained for 6000 iterations.
Other real-world task training details and hyperparameters are consistent with Table~\ref{tab:hyperparameters}.

\begin{table}[t]
\centering
\caption{Hyperparameters and Configuration for VLA-GSE Fine-tuning}
\label{tab:hyperparameters}
\begin{tabular}{lc}
\toprule
\textbf{Hyperparameter / Setting} & \textbf{Value} \\
\midrule
\multicolumn{2}{c}{\textit{Architecture \& VLA-GSE Configuration}} \\
\midrule
Base VLM Backbone & Qwen3-VL-4B-Instruct \\
Action Head & OpenVLA-OFT MLP \citep{kim2025fine} \\
Rank ($r$) & 16 \\
Rank of Generalized Expert ($r_g$) & 2 \\
Rank of Specialized Expert ($d$) & 2 \\
Total Experts per GSE Block & 8 \\
Generalized Experts & 1 \\
Top-$k$ Routing & 2 \\
Router Initialization & Random Gaussian Noise \\
Initialization Type &  SVD-based GSE \\
Total Generalized Experts in Model & 356  \\
Total Specialized Experts in Model & 2492 \\
\midrule
\multicolumn{2}{c}{\textit{Training \& Optimization}} \\
\midrule
Training Dataset & LIBERO Benchmark (all 4 suites) \\
Total Optimization Steps & 80,000 \\
Batch Size per GPU & 16 \\
Total Effective Batch Size & 128 \\
Learning Rate (VLM / GSE Params) & $1 \times 10^{-5}$ \\
Learning Rate (Action MLP Head) & $1 \times 10^{-4}$ \\
Auxiliary Loss Weight ($\alpha$) & 0.01 \\
Generalized Scaling Factor ($s_g$) & 2 \\
Specialized Scaling Factor ($s_s^i$) &  $\frac{2 \cdot \sum_{j=1}^E \mathrm{Tr}(\Sigma_j)}{E \cdot \mathrm{Tr}(\Sigma_i)}$ \\
Hardware & 8 $\times$ NVIDIA A100 GPUs \\
\midrule
\multicolumn{2}{c}{\textit{Parameter Efficiency}} \\
\midrule
Total Parameters & 4,551.85 M \\
Frozen Parameters (Non-GSE) & 4,437.82 M \\
Trainable Parameters (Total) & 114.04 M (2.51\%) \\
Trainable Parameters (GSE Modules) & 48.41 M \\
Trainable Parameters (Action Head) & 65.62 M \\
\bottomrule
\end{tabular}
\end{table}

\section{VLA-GSE Algorithm Pseudo-code}
\label{sec:appendix_algorithm}

In this section, we provide the complete pseudo-code for the VLA-GSE framework.
Algorithm~\ref{alg:gse} explicitly outlines the two primary operational phases of our approach. 
The first phase is the SVD-based adaptive initialization, where the pre-trained weight matrix is decomposed. The dominant singular values are allocated to the generalized expert to anchor foundational knowledge, while the subsequent singular components are partitioned among the specialized experts. A crucial element of this phase is the backbone weight adjustment scheme via residual subtraction, which preemptively deducts the expected initial contribution of the newly added experts from the pre-trained backbone. The second phase details the forward pass, which aggregates the adjusted frozen weights, the continuously active generalized expert, and the dynamically routed top-$k$ specialized experts.

\begin{algorithm}[h]
\caption{GSE: Generalized and Specialized Experts Initialization and Forward Pass}
\label{alg:gse}
\begin{algorithmic}[1]
\Require Input $x$, pre-trained weight $W_0$, rank $r_g$ for the generalized expert, rank $d$ for specialized experts, number of specialized experts $E$, generalized scaling factor $s_g$, and specialized scaling factors $\{s_s^i\}_{i=1}^{E}$
\State \textbf{Initialization Phase:}
\State Compute SVD: $W_0 = U \Sigma V^\top$
\State \Comment{1. Initialize generalized expert (largest singular values)}
\State $B_g \gets \sqrt{\frac{1}{s_g}} U_{1:r_g} \Sigma_{1:r_g}^{1/2}$
\State $A_g \gets \sqrt{\frac{1}{s_g}} \Sigma_{1:r_g}^{1/2} (V_{1:r_g})^\top$
\State \Comment{2. Initialize specialized experts (subsequent singular values)}
\For{$i = 1$ to $E$}
    \State Extract the $i$-th SVD segment $(U_i, \Sigma_i, V_i^\top)$ of rank $d$, starting after index $r_g$
    \State $B_s^i \gets \sqrt{\frac{1}{s_s^i}} U_i \Sigma_i^{1/2}$
    \State $A_s^i \gets \sqrt{\frac{1}{s_s^i}} \Sigma_i^{1/2} V_i^\top$
\EndFor
\State \Comment{3. Backbone weight adjustment via residual subtraction}
\State $W_{\mathrm{res}} \gets s_g B_g A_g + \frac{1}{E}\sum_{i=1}^{E} s_s^i B_s^i A_s^i$
\State $\tilde{W}_0 \gets W_0 - W_{\mathrm{res}}$
\State \Return $\tilde{W}_0, B_g, A_g, \{B_s^i, A_s^i\}_{i=1}^{E}$

\Statex
\State \textbf{Forward Pass Phase$(x)$:}
\State Compute gating scores $w^i(x)$ for specialized experts
\State Select top-$k$ specialized experts $\Omega_k(x)$
\State $y_{\mathrm{gen}} \gets s_g B_g A_g x$
\State $y_{\mathrm{spec}} \gets \sum_{i \in \Omega_k(x)} w^i(x)\, s_s^i\, B_s^i A_s^i x$
\State \Return $\tilde{W}_0 x + y_{\mathrm{gen}} + y_{\mathrm{spec}}$
\end{algorithmic}
\end{algorithm}

\section{Catastrophic Forgetting under VLM-to-VLA Full-Parameter Fine-Tuning}
\label{app:catastrophic_forgetting_vla}

A growing body of recent work has pointed out that directly adapting a pretrained VLM into a VLA by full-parameter fine-tuning can easily induce \emph{catastrophic forgetting}. Although the VLM-to-VLA paradigm is attractive because it transfers rich visual-semantic priors from internet-scale pretraining to embodied control, the optimization objective in robot learning is substantially narrower than that of the original VLM. As a result, the backbone can over-specialize to low-level action prediction and partially lose the open-world understanding ability that originally motivated using a VLM backbone in the first place.

Several recent studies provide direct evidence for this phenomenon. 
\citet{hancock2025actions} argue that catastrophic forgetting in VLM-to-VLA transfer is fundamentally related to the distribution mismatch between web-scale VLM pretraining data and robotics fine-tuning data. They show that learning to generate actions often diminishes the VLM's original reasoning and multimodal understanding ability, which in turn harms semantic generalization and instruction following. Their solution is primarily \emph{data-centric}: by re-expressing actions as language, they reduce representational mismatch and enable LoRA-based adaptation without heavily perturbing the pretrained backbone.

\citet{dey2024revla} provide complementary evidence from the vision backbone perspective. Focusing on OpenVLA, they show that fine-tuning the full robotic model can lead to severe forgetting in the visual encoders, especially DINO-v2. In particular, they demonstrate that the DINO-v2 encoder inside OpenVLA fails on depth regression after VLA training, in sharp contrast to the original pretrained backbone. Their findings suggest that end-to-end robotic fine-tuning may overwrite core spatial representations required for visual generalization. Their remedy is \emph{post hoc restoration}: they partially revert the visual encoders back to their pretrained states through backbone reversal and model merging.

\citet{yang2025instructvla} further observe that existing VLA models often suffer catastrophic forgetting of pretrained vision-language capabilities when adapted to manipulation. Their analysis shows that directly fine-tuning OpenVLA on instruction-rich embodied data yields limited gains on reasoning-intensive tasks because the original vision-language competence is not well preserved. To address this, they propose a \emph{training-centric} solution based on joint multimodal instruction tuning, where embodied data and general multimodal data are optimized together.

Most strikingly, \citet{yu2026twinbrainvla} provide explicit empirical analysis of the severity of forgetting during VLA training. They report that standard robot-only VLA fine-tuning can cause a near-complete collapse of general visual understanding; for example, the POPE score of Qwen3-VL drops from 88.87\% to 0.04\% after fine-tuning. They also show that simple co-training with general visual QA data only partially alleviates the issue. Their proposed solution is therefore \emph{architectural decoupling}: a frozen ``Left Brain'' preserves general semantic understanding, while a trainable ``Right Brain'' specializes in embodied control.

Our proposed VLA-GSE differs from the above lines of work in an important way. Rather than addressing forgetting primarily through data relabeling, co-training, post-training reversal, or duplicating the backbone into separate frozen/trainable branches, we approach the problem from the perspective of \emph{architectural improvement within the adaptation module itself}. Concretely, VLA-GSE is built on the hypothesis that the tension between general visual-semantic knowledge and task-specific embodied adaptation should be handled by a better parameterization of the trainable update, rather than by fully rewriting the backbone. The generalized expert is designed to preserve and reuse dominant pretrained directions that support broad transferable capability, while the specialized experts allocate dedicated capacity for task-dependent embodied adaptation. This design aims to improve specialization without forcing the backbone update to collapse into a single monolithic fine-tuning trajectory.

Therefore, compared with prior methods, VLA-GSE is not merely a forgetting \emph{mitigation strategy} added on top of standard adaptation; it is an architectural reformulation of the adaptation process itself. In this sense, our method is conceptually closer to improving the \emph{structure of transfer} than to modifying the training data distribution or repairing forgetting after it has already occurred. We believe this distinction is important: it suggests that preserving general VLM capability and acquiring embodied expertise need not be treated as a purely training-data trade-off, but can instead be addressed through a more suitable model architecture for VLM-to-VLA transfer.

\section{Detailed Explanation of the Load Balancing Auxiliary Loss}
\label{app:load_balancing}

Each GSE block adopts sparse routing over the {specialized experts}, which enables input-dependent adaptation while preserving computational efficiency. However, sparse top-$k$ routing is known to suffer from a degenerate optimization behavior in which only a small subset of experts is selected for most tokens, whereas the remaining experts receive little gradient signal. This \emph{routing collapse} effect reduces the effective capacity of the expert pool and weakens the intended specialization mechanism. To mitigate this issue, we introduce an auxiliary load-balancing objective following prior sparse MoE literature \citep{dai2024deepseekmoe,fedus2022switch}.

We emphasize that this regularizer is imposed only on the routed specialized experts. The \emph{Generalized Expert} is always activated and therefore does not participate in the routing competition.

For the $l$-th GSE block, consider a batch of $T$ tokens with token representations $\{x^{(t)}\}_{t=1}^{T}$. Let $\Omega_k(x^{(t)})$ denote the set of top-$k$ selected specialized experts for token $x^{(t)}$, and let $p^i(x^{(t)})$ denote the corresponding soft routing probability of expert $i$. We define the empirical selection frequency of expert $i$ as
\begin{equation}
f_i^{(l)} = \frac{1}{T} \sum_{t=1}^{T} \mathbb{I}\big(i \in \Omega_k(x^{(t)})\big),
\end{equation}
which measures how often expert $i$ is actually activated under hard routing. In parallel, we define the average routing mass of expert $i$ as
\begin{equation}
P_i^{(l)} = \frac{1}{T} \sum_{t=1}^{T} p^i(x^{(t)}),
\end{equation}
which reflects the router's continuous preference for that expert before the top-$k$ truncation.

The auxiliary balancing term for block $l$ is then given by
\begin{equation}
\mathcal{L}_{bal}^{(l)} = E \sum_{i=1}^{E} f_i^{(l)} P_i^{(l)},
\end{equation}
where $E$ is the number of specialized experts. This form couples the hard assignment statistics $f_i^{(l)}$ with the soft routing scores $P_i^{(l)}$, and therefore penalizes imbalance at both the discrete and probabilistic levels. In particular, if the router collapses onto a small number of experts, then both $f_i^{(l)}$ and $P_i^{(l)}$ concentrate on those experts, leading to a larger penalty. Conversely, when expert usage is more evenly distributed, both quantities become closer to uniform and the regularizer is correspondingly reduced.

The overall training objective is
\begin{equation}
\mathcal{L}
= \| \hat{a} - a \|_1
+ \alpha \sum_{l=1}^{L} \mathcal{L}_{bal}^{(l)},
\end{equation}
where $\| \hat{a} - a \|_1$ denotes the action regression loss and $\alpha$ controls the strength of the auxiliary regularization.

From an optimization perspective, this objective encourages agreement between the router's soft probability mass and its hard top-$k$ allocation statistics while disincentivizing concentration on a small subset of experts. Under balanced routing, one expects both $f_i^{(l)}$ and $P_i^{(l)}$ to approach a near-uniform distribution across experts, in which case the loss is minimized up to the top-$k$ sparsity constraint. Consequently, the regularizer improves training stability, prevents persistent expert under-utilization, and promotes more effective specialization across the expert pool. Empirically, this effect is important for maintaining the intended division of labor among specialized experts and for fully exploiting the parameter efficiency of the GSE design.

% \newpage
% \input{checklist.tex}

\end{document}